\theoremstyle{plain}
\newtheorem{lemma}{Lemma}
\newtheorem{corollary}{Corollary}
\theoremstyle{definition}
\theoremstyle{remark}
\newcommand{\R}{\mathbb{R}}
\newcommand{\N}{\mathbb{N}}
\newcommand{\prob}{\mathbb{P}} 
\newcommand{\E}{\mathbb{E}} 
\newcommand{\D}{\mathcal{D}}
\newcommand{\sts}{\mathcal{S}} 
\newcommand{\as}{\mathcal{A}} 
\newcommand{\mdp}{\mathcal{M}} 
\newcommand{\tmax}{t_{\max}} 
\newcommand{\argmax}{\text{argmax} }
\newcommand{\VaR}{{\text{VaR}} }
\newcommand{\EIG}{\text{EIG}}
\newcommand{\pia}{{\pi^{\text{A}}}}
\newcommand{\en}{e_{\text{n}}}
\newcommand{\ec}{e_{\text{c}}}
\newcommand{\ea}{e_{\text{a}}}
\title{PAC Apprenticeship Learning with \\ Bayesian Active Inverse Reinforcement Learning}
\author{Ondrej Bajgar\textsuperscript{1}, Dewi S.W. Gould\textsuperscript{2}, Jonathon Liu\textsuperscript{3}, \\ 
Alessandro Abate\textsuperscript{1}, Konstantinos Gatsis\textsuperscript{4}, Michael A. Osborne\textsuperscript{1}}
\keywords{inverse reinforcement learning, active learning, imitation learning, Bayesian methods} 
\begin{document}

\fancyhead[R]{}
\maketitle  

\begin{abstract}
      As AI systems become increasingly autonomous, reliably aligning their decision-making with human preferences is essential. Inverse reinforcement learning (IRL) offers a promising approach to infer preferences from demonstrations. These preferences can then be used to produce an apprentice policy that performs well on the demonstrated task. However, in domains like autonomous driving or robotics, where errors can have serious consequences, we need not just good average performance but reliable policies with formal guarantees -- yet obtaining sufficient human demonstrations for reliability guarantees can be costly. \emph{Active} IRL addresses this challenge by strategically selecting the most informative scenarios for human demonstration. We introduce PAC-EIG, an information-theoretic acquisition function that directly targets probably-approximately-correct (PAC) guarantees for the learned policy -- providing the first such theoretical guarantee for active IRL with noisy expert demonstrations. Our method maximises information gain about the regret of the apprentice policy, efficiently identifying states requiring further demonstration. We also present Reward-EIG as an alternative when learning the reward itself is the primary objective. Focusing on finite state-action spaces, we prove convergence bounds, illustrate failure modes of prior heuristic methods, and demonstrate our method's advantages experimentally.
\end{abstract}
\section{Introduction}
Stuart Russell suggested three principles for the development of beneficial artificial intelligence: its only objective is to realize human preferences, it is initially uncertain about these preferences, and its ultimate source of information about them is human behaviour \citep{zotero-1078}. \emph{Apprenticeship learning} via Bayesian \emph{inverse reinforcement learning} (IRL) can be understood as a possible operationalization of these principles: Bayesian IRL starts with a prior distribution over reward functions representing initial uncertainty about human preferences.
It then combines this prior with \emph{demonstration} data from a human expert acting approximately optimally with respect to the unknown reward, to produce a posterior distribution over rewards. In apprenticeship learning, this posterior over rewards is then used to produce a policy that should perform well with respect to the unknown reward function. 

However, getting human demonstrations requires scarce human time. Also, many risky situations where we would wish AI systems to behave especially reliably may be rare in naturally occurring demonstration data. Bayesian active learning can help with both by giving queries to a human demonstrator that are likely to bring the most useful information about the reward. 

Prior methods for active IRL each suffer from significant limitations: \citet{metelli2021} provide a largely theoretical treatment assuming perfectly optimal expert demonstrations, which not only is a strong assumption, but also hinders identifiability. None of the methods that can address noisy demonstrations provide theoretical guarantees. Furthermore, most methods \citep{10.1007/978-3-642-04174-7_3,brown2018,metelli2021} query the expert for action annotations of particular isolated states. However, in domains such as autonomous driving with a high frequency of actions, it can be much more natural for the human to provide whole trajectories -- say, to drive for a while in a simulator -- than to annotate a large collection of unrelated snapshots. There is one previous paper on \emph{active IRL with full trajectories} \citep{kweon2023} suggesting a heuristic acquisition function whose shortcomings can, however, completely prevent learning, as we will demonstrate. Instead, we propose using the principled tools of Bayesian active learning, formulate two methods that can query for full trajectories, and provide theoretical guarantees for one of them. While in this paper, we work in the setting of finite state and action spaces, the methods are designed to suitably generalize to continuous settings, which we plan in future work.

The article provides the following contributions: 

\begin{enumerate}
    \item We explain and demonstrate failure modes of existing heuristic methods for active IRL when the goal is to produce a well-performing apprentice policy. In particular, most previous methods are limited to querying for only a single state annotation, as opposed to whole trajectories. Furthermore, we show that the only prior method designed for querying whole trajectories can result in repeatedly querying a single uninformative state forever.
    \item We propose PAC-EIG, an acquisition function based on expected information gain (EIG) that directly targets \textit{probably approximately correct} (PAC) guarantees for the apprentice policy -- providing the first such theoretical guarantee for active IRL with imperfect expert demonstrations.
    \item We present Reward-EIG as an alternative when learning the reward itself is the primary objective.
    \item We prove convergence bounds showing the expected number of expert demonstrations needed to achieve PAC guarantees.
    \item We illustrate the performance of our methods in a set of gridworld experiments, demonstrating their effectiveness compared to prior heuristic approaches.
\end{enumerate}

\section{Task formulation}
\label{sec:task}

\begin{figure*}[t]
    \centering
    \begin{subfigure}[b]{0.32\textwidth}
        \centering
        \includegraphics[width=\textwidth]{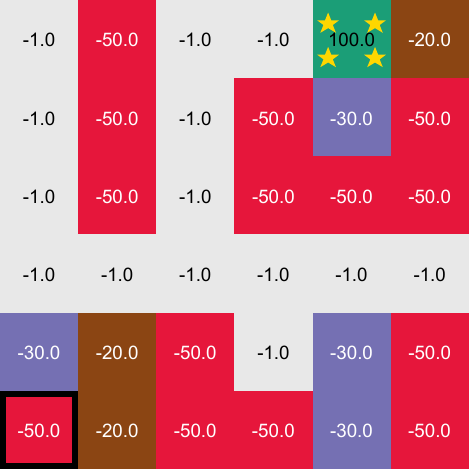}
        \caption{Ground-truth rewards.}
        \label{fig:pedagogical_setup}
    \end{subfigure}
    \hfill
    \begin{subfigure}[b]{0.33\textwidth}
        \centering
        \includegraphics[width=\textwidth]{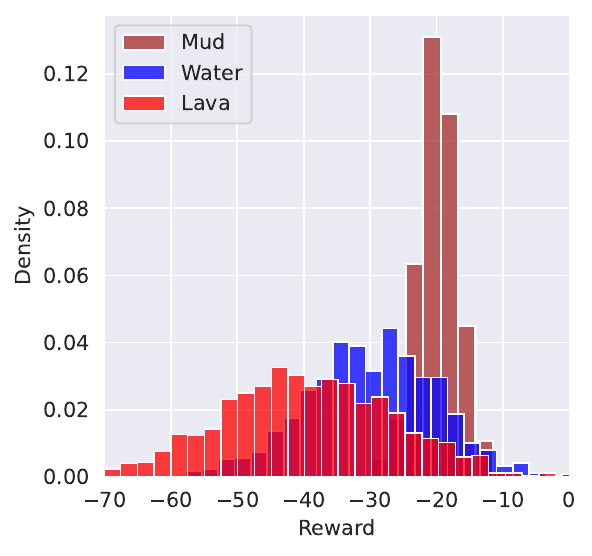}
        \caption{Current belief over rewards.}
        \label{fig:pedagogical_dist}
    \end{subfigure}
    \hfill
    \begin{subfigure}[b]{0.32\textwidth}
        \centering
        \includegraphics[width=\textwidth]{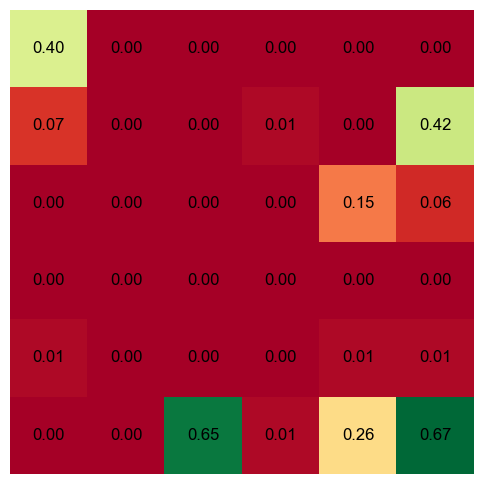}
        \caption{Reward EIG of each initial state.}
        \label{fig:pedagogical_eig}
    \end{subfigure}
    \caption{Illustration of the active IRL task. (a) shows a gridworld and its true rewards. The lower left corner has a "jail" state with negative reward from which an agent cannot leave. 
    The starred green state is the terminal "goal" state with a large positive reward. 
    The brown, blue, and red states are "mud", "water", and "lava" type states respectively, whose rewards are unknown to the IRL agent.
    The IRL agent tries to learn the rewards of these three state types from expert demonstrations.
    (b) shows current distributions over the rewards of the "mud", "water", and "lava" state types respectively, at some particular step of the active learning process. 
    These learned reward distributions are used to calculate an acquisition function (here the reward EIG) of obtaining another expert demonstration starting from each given state, shown in (c). 
    In this case, a demonstration starting in the bottom right state gives the most information about the unknown reward parameters.}
    \label{fig:illustrative_pedagogical}
\end{figure*}

Let $\mdp=\left(\sts, \as, p, r, \gamma, \tmax, \rho\right)$ be a parameterized Markov decision process (MDP), where $\sts$ and $\as$ are finite state and action spaces respectively, $p: \sts \times \as \to \mathcal P(\sts)$ is the transition function where $\mathcal P(\sts)$ is a set of probability 
measures over $\sts$, $r: \sts \times \as \to \mathbb{R}$ is an (expected) reward function,\footnote{\label{footnote:stochastic-rewards}Our formulation permits the reward to be stochastic. However, our expert model~(\ref{eq:boltzmann-rat}) depends on the rewards only via the optimal Q-function, which in turn depends only on the expected reward. Thus, the demonstrations can only ever give us information about the expectation. Throughout the paper, the learnt reward function can be interpreted either as modelling a deterministic reward, or an expectation of a stochastic reward.}
$ \gamma\in (0,1) $ is a discount rate, $\tmax\in\N\cup\{\infty\}$ is the time horizon, and $\rho$ is the initial state distribution. We assume the learner has full knowledge of the MDP except for the reward.

We assume we are initially uncertain about the reward $r$, and our initial knowledge is captured by a prior distribution $p(r)$ over rewards, which is a distribution over $\R^{|\sts| |\as|}$ -- a space of vectors representing the reward associated with each state-action pair (there may be fewer reward parameters than $|\sts| |\as|$, but that can be seen as a special case). We also have access to an expert that, given an initial state $s_0$ of the MDP, can produce a trajectory $\tau_i = \left((s^i_0,a^i_0),\dots,(s^i_{n_i},a^i_{n_i})\right)$, where $s_0^i\sim \rho$, $s_{t+1}\sim p(\cdot|s_t,a_t)$, and
 \begin{equation}
    \label{eq:boltzmann-rat} 
    \pi^E (a_t | s_t) = \frac{\exp(\beta Q^*(s_t,a_t))}{\sum_{a'\in\as} \exp(\beta Q^*(s_t,a'))} \; ,
\end{equation} 
which is called a \emph{Boltzmann-rational} policy, given the optimal Q-function $Q^*$ and a coefficient $\beta$ expressing how close to optimal the expert behaviour is (where $\beta=0$ corresponds to fully random behaviour and $\beta \to + \infty$ would yield the optimal policy). We assume $\beta$ is known as is usual in related IRL literature \citep{ramachandran2007,chan2021,kweon2023,bajgar2024}. We will also denote by $\pi^{\text{E}}_r$ the hypothetical expert policy that would correspond to a reward $r$.

 The task of \emph{Bayesian active inverse reinforcement learning} is to sequentially query the expert to provide demonstrations from initial states $\xi_1,\dots,\xi_N\in\sts$ to gain maximum information about the unknown reward.\footnote{Since the queries $\xi_i$ in this paper are limited to the choice of the initial state $s_0$, $\xi$ and $s_0$ are used somewhat interchangeably throughout the paper.} We start with a (possibly empty) set of expert trajectories $\D_0$ and then, at each step of active learning, we choose an initial state $\xi_i$ for the MDP, from which we get the corresponding expert trajectory $\tau_i$. We then update our demonstration dataset to $\D_i=\D_{i-1}\cup\{\tau_i\}$, and the distribution over rewards to $p(r|\D_i)$, which we again use to select the most informative initial state $\xi_{i+1}$ in the next step. We repeat until we exhaust our limited demonstration budget $N$.

This can be done with one of two possible objectives in mind.

The first, which we call the \emph{reward-learning objective}, is relevant when our primary interest is in the reward itself, e.g. when using IRL to understand the motivations of mice in a maze \citep{ashwood2022} or the preferences of drivers \citep{huang2022}. In the active setting, we operationalize this objective as trying to minimize the entropy of the posterior distribution over rewards, once all expert demonstrations have been observed. This is equivalent to maximizing the log likelihood of the true parameter value in expectation, or to maximizing the mutual information between the demonstrations and the reward. Figure~\ref{fig:illustrative_pedagogical} illustrates Active IRL with this objective.

The second possible objective, which we term the \emph{apprenticeship-learning objective}, uses the final posterior $p(r|\D_N)$ to produce an \emph{apprentice policy} that should perform well in the MDP. One option may be to optimize the expected return of the apprentice policy, i.e. solve for $\argmax_\pi \E_{r|\D_N} [\E_\tau [\sum_{s_t,a_t\in\tau}\gamma^t r(s_t,a_t)]]$,
where $\tau$ is a trajectory with $s_0\sim\rho$, $s_{t+1}\sim p(\cdot|s_t,a_t)$ and $a_t = \pi(s_t)$. The argmax can be resolved by solving the forward planning problem for finding the optimal policy for the expected reward with respect to the learner's current posterior over rewards (e.g. using generalized policy iteration \citep{sutton2018}). Going forward, we generally assume a deterministic apprentice policy, i.e. the class of policies we search over is the set of mappings $\pi:\sts\to\as$, though stochastic policies could easily be accommodated as well. The apprentice policy will thus be distinct from the stochastic (Boltzmann rational) expert policy and with enough knowledge can have higher expected return, since the expert gives non-zero probability to sub-optimal actions. 

However, maximising expected return may not be sufficient in safety-critical domains. We may instead require a \emph{reliable} apprentice policy that performs well with high probability -- formally, one that is $\epsilon$-$\delta$-\emph{probably approximately correct} (PAC). This means finding an apprentice policy $\pi^{\text{A}}$ such that the probability (with respect to the reward posterior) of the expected return (with respect to initial state and transition distributions) being at least $G^*-\epsilon$ is at least $1-\delta$, where $G^*$ is the expected return of the optimal policy.

These objectives are often closely connected -- learning about the reward function enables improving the apprentice policy. However, especially in the active setting, they can come apart -- for instance, once we know an action $a$ leads to lower return than $a'$ in a particular state, we may no longer need to gather further information about rewards in these states for the apprenticeship learning objective as we already know to choose the better action, while the reward-learning objective may motivate further queries to reduce the reward uncertainty. 

Stemming from a common inspiration in Bayesian active learning, we will present an acquisition function tailored to each of these objectives.

\paragraph{Notation} By $V_r^\pi$ we denote the state-value function of policy $\pi$ with respect to reward $r$. $V_r^*$ is then the value function of the optimal policy with respect to $r$. A lack of subscript, as in $V^*$, indicates (optimal) value with respect to the true reward (since the true reward is not known by the learner, this generally needs to be treated as random variable). By $G_r(\tau)$ we denote the return of trajectory $\tau$ with respect to $r$. By $R^\pi_r(s_0)$ we denote the regret of policy $\pi$ starting from state $s_0$, i.e. $R^\pi_r(s_0):=V^*_r(s_0)-V^\pi_r(s_0)$, and $R^\pi_r:=\E_{s_0\sim\rho}R^\pi_r(s_0)$. We also call \emph{immediate regret} the quantity $R^*_{\pi,r}(s)=V^*(s)-Q^*(s,\pi(s))$ and also denote by $R^*_{\pi,r}(s,a)=\max\{0,Q^*(s,a)-Q^*(s,\pi(s))\}$ the immediate regret relative to action $a$ in state $s$. You can also find a table with the notation used in this paper in Appendix~\ref{sup:notation}.
\section{Related work}
\label{sec:prior-work}

IRL was first introduced by \citet{russell1998}, preceded by the closely related problem of \emph{inverse optimal control} formulated by \citet{kalman1964}. See \citet{arora2021} and \citet{adams2022} for recent reviews of the already extensive literature on IRL. In our work we build upon the Bayesian formulation of the problem introduced by \citet{ramachandran2007}.

We will now summarize prior work on \textit{active} IRL in particular. We first describe a number of methods that query for single state annotations (which can be cast into our framework from Section~\ref{sec:task} as trajectories of length one), and then describe the one previous method which queries for whole trajectories. Lastly, we review a few other works for setups not directly comparable to ours.

\subsection{Active learning with single action annotations}

The concept of active IRL was first introduced by \citet{10.1007/978-3-642-04174-7_3}. The authors propose an acquisition function equal to the entropy of the posterior predictive distribution about the Boltzmann expert policy, i.e. they query a state maximizing $\alpha^{\text{Lopes}}_n(s) = H(\Pi_s|\D_n)$
where $\Pi_s$ is the vector of expert action probabilities in state $s$ (according to the posterior predictive distribution).

An issue with this approach is that $H(\Pi_s | \D_n)$ does not take into account the effect of improved knowledge on the apprentice policy. For example, we may know the optimal action in a particular state, but with high uncertainty about the exact action probabilities, while another state may have uncertainty about the optimal action, but lower entropy about exact probabilities of actions. Then, $\alpha^{\text{Lopes}}_n(s)$
would prioritize the latter, which may be suboptimal from the apprenticeship learning perspective. See Appendix \ref{sup:failure-modes} for a full example.

\citet{brown2018} query the expert by maximizing the $\delta$-value-at-risk of the policy loss (i.e. regret) of the current apprentice policy starting from the given initial state, computed as
\begin{equation}
    \alpha^{\text{Brown}}_n(s) = \VaR_\delta \left( V^{\pi^*}(s) - V^{\pia}(s) | \D_n \right) \,.
\end{equation}
This is a risk-aware approach: the states with a high risk of the apprentice action being much worse than the expert's action are queried. A limitation of this approach is that regret attributed to some initial state $s$ may be due to a choice made further along the trajectory where an expert query would be more informative as shown in Appendix \ref{sup:failure-modes}.


\subsection{Active learning with full trajectories}

\citet{kweon2023} query full trajectories with a starting state $s_0$ chosen to maximize 
\begin{equation}
    \alpha_n^{\text{Kweon}}(s_0) = \mathbb{E}_{\tau\sim \hat{\pi}_E^{\D_n}}
    \Bigl[
    \sum_{s_t\in\tau} {\tilde\alpha}_n(s_t) | s_0
    \Bigr] \,,
\end{equation}
where
\begin{equation*}
   {\tilde\alpha}_n(s) := H(\hat{\pi}_E^{\D_n}(a|s)) := \sum_a -\hat{\pi}_E^{\D_n}(a|s) \log \hat{\pi}_E^{\D_n}(a|s),
\end{equation*}
is the entropy of $\hat{\pi}^{\D_n}_E$, the posterior predictive distribution over the expert actions at state $s$, estimated from demonstration data $\D_n$.

However, note that this action entropy can remain high even in states where we have perfect knowledge, but multiple actions are equally good, so the Boltzmann rational policy chooses them with equal probabilities, resulting in high action entropy. However, querying for extra demonstrations in such states will bring no useful knowledge. In fact, this can result in learning getting completely stuck, sometimes right at the beginning, preventing \textit{any} learning from taking place. This is the case in the jail environment in Figure~\ref{fig:illustrative_pedagogical}, and we show this in Section~\ref{sec:experiment}.

\subsection{Other settings}

Instead of querying at arbitrary states, \citet{losey2018} and \citet{lindner2022} synthesize a policy that explores the environment to produce a trajectory which subsequently gets annotated by the expert. We instead let the expert produce the trajectory. \citet{buening2024} query full trajectories in the context of IRL, where the active component arises in the choice of a transition function from a set of transition functions at each step. \citet{buning2022} also query full trajectories in a different context involving two cooperating autonomous agents. 
In \citet{sadigh2017}, the expert is asked to provide a relative preference between two sample trajectories synthesized by the algorithm. While this generally provides less information per query than our formulation, it is a useful alternative for situations where providing high-quality demonstrations is difficult for humans.

On the side of theoretical sample complexity of (active) IRL, all prior work assumes a perfectly rational expert policy, which is a stronger assumption than our Boltzmann rationality. In particular, seeing each state once is enough to determine the optimal policy. The first lower bound on the complexity of IRL was given by \citet{komanduru} for the case of a $\beta$-separable finite set of candidate rewards. \citet{metelli2021}, \citet{lindner2022}, and \citet{metelli2023} then focus on recovering a feasible reward set in settings where also the transition dynamics are only estimated, and address the problem of the transferability of the learnt reward to environments with different dynamics.

\section{Method}
\label{sec:method}

We propose PAC-EIG, an acquisition function based on expected information gain (EIG) that aims to produce a probably approximately correct (PAC) apprentice policy. Our approach builds on principled Bayesian experimental design  \citep{rainforth2023} to identify initial states for expert demonstrations that will yield information about the immediate regret of the apprentice policy. The intuition is that knowing about the regret in various states allows us to identify high-regret states where the apprentice policy can be improved. We then also show how EIG can be adapted to the reward-learning objective, resulting in the Reward-EIG acquisition function.

\subsection{PAC-EIG: Information Gain for Reliable Policies}

Our goal is to produce an apprentice policy that is probably approximately correct -- that is, with high probability ($1-\delta$), the policy's regret is bounded by $\epsilon$. To achieve this efficiently, we need to identify states where the current apprentice policy might be making poor decisions. To this end, we define the \emph{immediate regret} of an apprentice policy $\pia$ in state $s$ as:
\begin{equation}
R^*_{\pia,r}(s) = V^*_r(s) - Q^*_r(s, \pia(s))
\end{equation}
which captures how much value we lose by following the apprentice policy in state $s$ compared to optimal behaviour. This can be decomposed per action as $$R^*_{\pia,r}(s,a) = \max\{0, Q^*_r(s,a) - Q^*_r(s,\pia(s))\},$$ representing the regret relative to choosing a particular alternative action $a$. This regret is unknown to us, so we need to treat it as a random variable. We propose to use the expected information gain about this immediate regret as a theoretical acquisition function that helps us find a PAC policy.

For practical computation, we discretize the immediate regret into a ternary variable $E_{s,a}$ tracking state- and action-wise correctness as follows:
\begin{equation}
    \label{eq:discretized-regret}
E_{s,a}^{\pia} = \begin{cases}
    \text{``correct''} & \text{if } R^*_{\pia,r}(s,a) = 0 \\
    \text{``approximately correct''} & \text{if } 0 < R^*_{\pia,r}(s,a) < \epsilon (1-\gamma)\\
    \text{``not correct''} & \text{if } R^*_{\pia,r}(s,a) \geq \epsilon (1-\gamma)
\end{cases}
\end{equation}
Our acquisition function then maximizes the expected information gain about these discretized regret values:
\begin{equation}
    \alpha^{\text{PAC-EIG}}_n(s_0) := I(\tau; E^{\pia} | s_0, \mathcal{D}_n)
\end{equation}
where $E^{\pia} = (E^{\pia}_{s,a})_{s \in \mathcal{S}, a \in \mathcal{A}}$ represents the discretized regret across all state-action pairs, and $\tau$ is the expert trajectory starting from $s_0$. Note that if the apprentice policy is approximately correct in all states with probability at least $1-\delta$ in the immediate regret sense, then we also satisfy the PAC criterion globally.

While we are aiming for a $\epsilon$-$\delta$-PAC policy as the final output, for intermediate steps, our algorithm uses an apprentice policy $\pia(s)=\argmax_a \prob[Q^*(s,\pi^*(s)) - Q^*(s,a)=0|\D_n],\;\forall s\in\sts$, i.e. an apprentice policy maximizes the probability of taking the optimal action in each state. The reason for this choice is that it ensures there is probability mass on both of two contrastive options: the policy being optimal in any given state on the one hand, and, on the other, it having an immediate regret of at least $(1-\gamma)\epsilon$ in at least one state as long as the PAC condition is not satisfied. This contrast ensures that we gain information by observing the expert in this state as we detail in the next section.

This acquisition function acquires information about immediate regret \emph{in every state}, which allows to eventually learn a policy that does not lose more than $(1-\gamma)\epsilon$ of expected return in any given state. This can be useful to learn a uniformly reliable policy that is thus robust e.g. with respect to the choice of initial state. However, immediate regret in different states does not matter equally if we care about overall expected regret. The overall expected regret can be decomposed as $\sum_s\nu^\pia(s)R^*_\pia(s)$ where $\nu^\pia(s)=\sum_t\gamma^t\prob[S_t=s]$ is the discounted expected occupancy of $s$ under the policy $\pia$. This allows us to replace the immediate regret in each state $s$ relative to action $a$ with its weighted version $\tilde{R}^*_\pia(s,a) = \nu(s) R^*_\pia(s,a)$, and obtain an analogous weighted acquisition function $\alpha^{\nu\text{PAC-EIG}}$ (in particular keeping the same fixed thresholds for the discretization), which is suitable especially for larger state spaces when expected occupancy is concentrated in only a smaller subset of the space. We will concentrate subsequent treatment on PAC-EIG for simplicity, but most points can be suitably extended also to $\nu$PAC-EIG.


\subsection{Computing PAC-EIG}

To compute PAC-EIG in practice, we leverage our Bayesian IRL posterior over Q-values. Given $\mathcal{Q}_n$, a set of $M$ samples from $p(Q^*|\mathcal{D}_n)$, we can:
\begin{enumerate}
    \item For each Q-value sample, compute the discretized regret values $E^{\pia}$. Note that multiple Q-value samples may map to the same discretized configuration $E^{\pia}$, so there are at most $M_E \leq M$ distinct values for the samples of $E^{\pia}$.
    \item Given a Q-value sample $Q_i^*$, sample expert trajectories $\tau$ starting from $s_0$ using the Boltzmann policy corresponding to $Q_i^*$.
    \item Estimate the expected information gain using the standard Monte Carlo estimator of EIG as 
    \begin{equation}
        \alpha^{\text{PAC-EIG}}_n(s_0) \approx \frac{1}{M} \sum_{i=1}^M \left[ \log p(\tau^{(i)}|E^{\pia,(i)},s_0) - \log p(\tau^{(i)}|s_0) \right]
    \end{equation}
    where the trajectory probability given $E^{\pia}$ can be computed as $p(\tau|E^{\pia},s_0) = \prod_{(s_t,a_t) \in \tau} p(a_t|s_t,E^\pia)$ (omitting the transition probabilities since they would cancel out in the log-ratio). To compute $p(a_t|s_t,E^{\pia})$, we average the expert action probabilities over all Q-value samples that map to the same discretized configuration: $p(a_t|s_t,E^{\pia}) = \frac{1}{|\mathcal{Q}_{E^\pia}|} \sum_{Q^* \in \mathcal{Q}_{E^\pia}} p(a_t|s_t,Q^*)$, where $\mathcal{Q}_{E^\pia}$ denotes the set of Q-value samples corresponding to the discretized regret configuration $E^{\pia}$ and the action probability the Boltzmann rational expert policy \ref{eq:boltzmann-rat}.
\end{enumerate}


\subsection{Reward EIG: When Learning the Reward is the Goal}

While our primary focus above has been on producing reliable apprentice policies, in some applications the reward function itself is of intrinsic interest -- for instance, when using IRL to understand animal behaviour \citep{ashwood2022} or human preferences \citep{huang2022}. For these cases, we can still use the EIG framework, but instead maximize the expected information gain about the reward:
\begin{equation}
    \alpha^{\text{Reward-EIG}}_n(s_0) := I(\tau;r|s_0,\mathcal{D}_n)
\end{equation}
where $\tau$ is treated as a random variable representing the expert's trajectory that would be produced starting from $s_0$.

This acquisition function aims to reduce posterior uncertainty about the reward parameters, which may query different states than PAC-EIG. For example, it might seek to precisely estimate reward values in states that the apprentice already knows to avoid, whereas PAC-EIG would consider such queries unnecessary.

The reward EIG can be computed as:
\begin{equation}
    \alpha^{\text{Reward-EIG}}_n(s_0) = \mathbb{E}_{r|\mathcal{D}_n} \left[ \mathbb{E}_{\tau|r,s_0}[\log p(\tau|r,s_0) - \log p(\tau|s_0;\mathcal{D}_n)] \right]
\end{equation}
where the inner expectation is tractable to compute from the Q-values that are usually obtained as a byproduct of a Bayesian IRL algorithm.

\section{Producing a PAC Policy}
\label{sec:pac}
Our PAC-EIG acquisition function is designed to efficiently produce a probably-approximately-correct (PAC) apprentice policy.
We will show that PAC-EIG leads to such a policy by establishing bounds on the expected number of expert demonstrations needed. The analysis proceeds through three key steps (with formal results and proofs in Appendix~\ref{sup:theory}):
\begin{enumerate}
	\item If no apprentice policy satisfies the PAC condition, there must exist a state where there is a significant chance that any apprentice policy makes a significantly suboptimal choice -- specifically:
    \begin{equation*}
        \prob_{r|\D_n} \bigl[ V_r^*(s) - Q^*_r(s, \pia(s)) \geq (1 - \gamma) \epsilon \bigr] \geq \frac{\delta}{|\sts|} \tag*{(Lemma~\ref{lemma:probabilistic-q-val-diff}).}
    \end{equation*}
	\item In such a state, there is both a chance that the apprentice policy $\pia$ is close to optimal and that it is significantly suboptimal (as defined in step 1). Since these two options would result in a sufficiently different expert policies in this state, we can gain a lower-bounded expected amount of information by observing the expert in that state.
	
	\item Since we gain at least this minimum information per query while the PAC condition is unmet, and our initial uncertainty is finite, we must eventually achieve the PAC condition. The number of steps is bounded by the ratio of initial entropy to the minimum information gain per step.
\end{enumerate}

These insights translate into the following two theorems:

\begin{restatable}{theorem}{thmmineig}
\label{thm:min-eig}
For $\epsilon > 0$ and $\delta \in (0,\frac{1}{2}]$, assume that no policy $\pia$ is $(\epsilon,\delta)$-probably-approximately-correct, i.e., $\prob[R^{\pi}_r \geq \epsilon] > \delta,\;\forall\pi$. Then, there exists a state $s\in\sts$ such that observing a new expert demonstration at $s$ has an expected information gain with respect to the variable $E^{\pia}$ (Eq. \ref{eq:discretized-regret}) of at least
\begin{equation}
    \EIG_{\text{min}}(\epsilon, \delta) = \frac{\delta(1-e^{-\beta(1-\gamma)\epsilon})^2}{4|\as|^2(|\as|-1)^3|\sts|}.
\end{equation}
\end{restatable}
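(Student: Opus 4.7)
The plan is to follow the three-step argument sketched in the text: (i) identify a state at which, conditional on the current posterior, there is significant mass on the apprentice being substantially suboptimal; (ii) show that at that state there is also significant mass on the complementary event that the apprentice is in fact optimal; (iii) argue that these two posterior events induce sufficiently different Boltzmann expert action distributions, so that the observation of a single expert transition at $s_0 = s^*$ necessarily carries a quantifiable amount of information about $E^{\pia}$.

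First I would apply Lemma~\ref{lemma:probabilistic-q-val-diff} (step~1 in the text) to the apprentice policy $\pia$ under the hypothesis that no policy is $(\epsilon,\delta)$-PAC. This yields a state $s^* \in \sts$ with
\[
\prob_{r|\D_n}\!\bigl[V^*_r(s^*) - Q^*_r(s^*,\pia(s^*)) \geq (1-\gamma)\epsilon\bigr] \;\geq\; \frac{\delta}{|\sts|}.
\]
By a pigeonhole over the $|\as|-1$ non-apprentice actions there exists $a^* \neq \pia(s^*)$ with $\prob[E^{\pia}_{s^*,a^*} = \text{``not correct''}] \geq \delta/(|\sts|(|\as|-1))$; call this event $N$. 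For the contrastive direction, I would use the definition of $\pia$ as maximizing the posterior probability of being optimal: since some action is optimal under every $r$, we get $\prob[\pia(s^*) \in \arg\max_a Q^*_r(s^*,a)] \geq 1/|\as|$; call this event $C$. The events $C$ and $N$ are disjoint and both have lower-bounded posterior mass.

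The substantive step is to translate the Q-value gap under $C$ versus $N$ into a distributional gap for the observed trajectory. Conditional on $C$, the Boltzmann policy~(\ref{eq:boltzmann-rat}) assigns $\pia(s^*)$ at least as much mass as $a^*$; conditional on $N$, the ratio $\pi^E(\pia(s^*)|s^*)/\pi^E(a^*|s^*) \leq e^{-\beta(1-\gamma)\epsilon}$. Restricting to the two-action event $\{a_0 \in \{\pia(s^*), a^*\}\}$ (which has probability at least $1/|\as|$ in either case, since one of the two is always optimal and so receives mass at least $1/|\as|$), the conditional probability of $a_0 = \pia(s^*)$ is at least $1/2$ under $C$ and at most $1/(1+e^{\beta(1-\gamma)\epsilon})$ under $N$, giving a gap of at least $(1 - e^{-\beta(1-\gamma)\epsilon})/4$.

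Finally, I would lower-bound the mutual information using the data-processing inequality $I(\tau; E^{\pia}\,|\,s_0=s^*) \geq I(Y;\mathbb{1}_N)$, where $Y = \mathbb{1}[a_0 = \pia(s^*)]$ restricted to $\{a_0 \in \{\pia(s^*), a^*\}\}$, combined with the Bernoulli mutual-information lower bound $I(Y;B) \geq 2\, p(1-p)\,(q_1-q_0)^2$ (a direct consequence of Pinsker's inequality applied to the two conditional laws). Plugging in the event probabilities from steps~1 and~2, the conditioning cost $1/|\as|$ coming from restricting to the two-action subset, and the squared gap $(1-e^{-\beta(1-\gamma)\epsilon})^2/16$ gives a bound of the claimed form, up to constants that are absorbed into the $|\as|^2(|\as|-1)^3|\sts|$ denominator. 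The main obstacle is the careful bookkeeping in this last step: there are several union bounds and conditionings, and one must be disciplined about which $|\as|$ factor enters where (one from the pigeonhole over actions, one from $\prob[C]\geq 1/|\as|$, and the remaining powers from the restriction to $\{\pia(s^*),a^*\}$ and the Pinsker-based conversion from TV to KL in a non-binary alphabet). Everything else is a routine assembly of the inequalities established in steps~1--3.
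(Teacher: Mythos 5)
Your overall architecture matches the paper's: Lemma~\ref{lemma:probabilistic-q-val-diff}, a pigeonhole over alternative actions, a contrast between a ``correct'' and a ``not correct'' event, a total-variation gap between the induced Boltzmann action distributions, and Pinsker to convert that into information. However, two steps in the middle do not go through as written.

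First, your claim that the two-action event $\{a_0\in\{\pia(s^*),a^*\}\}$ has probability at least $1/|\as|$ under $N$ ``since one of the two is always optimal'' is false. The event $N$ only says that $a^*$ beats $\pia(s^*)$ by $(1-\gamma)\epsilon$; a third action may have a much larger Q-value still, in which case the expert plays both $\pia(s^*)$ and $a^*$ with exponentially small probability and your restricted observation carries essentially no information. The paper avoids this by choosing $a'$ via a pigeonhole over which alternative action is \emph{optimal} conditional on $e_*$ (rather than over which is ``not correct''), which yields $\prob[a'\text{ optimal}\mid\en]\geq 1/(|\as|-1)$ and hence $\prob[A=a'\mid\en]\geq 1/(|\as|(|\as|-1))$. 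Your version of the pigeonhole gives the same lower bound on $\prob[N]$ but discards exactly the handle needed here.

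Second, the final reduction to the binary mutual information between $Y$ and the indicator of $N$ is lossy in a way that can make the bound vacuous: the complement of $N$ contains the ``approximately correct'' middle bucket, under which the ratio $\pi^E(\pia(s^*)\mid s^*)/\pi^E(a^*\mid s^*)$ can lie anywhere in $(e^{-\beta(1-\gamma)\epsilon},1)$. If that bucket carries most of the mass of the complement and its action distribution is close to the one under $N$, then $q_1-q_0\approx 0$ and $2p(1-p)(q_1-q_0)^2$ gives essentially nothing, even though $I(A;E_{s,a^*})$ remains bounded away from zero. The paper instead keeps the three-way decomposition $I(A;E_{s,a'})=\sum_e\prob[e]\,D_{\mathrm{KL}}(p(A\mid e)\,\|\,p(A))$, retains only the $\ec$ and $\en$ terms, and lower-bounds their sum via the triangle inequality $D_{TV}(p(A\mid\ec),p(A))+D_{TV}(p(A\mid\en),p(A))\geq D_{TV}(p(A\mid\ec),p(A\mid\en))$ followed by Pinsker; the contrast must be taken against $\ec$ specifically, not against all of the complement of $N$. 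Both gaps are fixable, but the fixes are precisely the parts of the argument where the stated constants are earned.
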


Then, we can translate this into the following result on the expected number of steps to reach the PAC criterion:
\begin{restatable}{theorem}{thmnumsteps}
\label{thm:num-steps}
Let $h_{\text{max}}$ be an upper bound on the entropy of the prior distribution over $E$, the PAC-EIG discretized regret values $E^{\pia}$ aggregated across all apprentice policies $\pia$. Then, the expected number of steps needed to reach the PAC condition is upper bounded by
\begin{equation}
    \frac{h_{\text{max}}}{\text{EIG}_{\text{min}}(\epsilon, \delta)} = \frac{4h_{\text{max}}|\as|^2(|\as|-1)^3||\mathcal{S}|}{\delta(1-e^{-\beta(1-\gamma)\epsilon})^2}.
\end{equation}
\end{restatable}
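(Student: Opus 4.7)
The plan is to convert Theorem~\ref{thm:min-eig} into a stopping-time bound via an information-theoretic telescoping argument on the posterior entropy of $E$. Let $T := \min\{n\geq 0 : \text{some }\pia\text{ is }(\epsilon,\delta)\text{-PAC under }p(r\mid\D_n)\}$ denote the first step at which the PAC condition is met. The target is to show $\E[T]\leq h_{\max}/\EIG_{\min}(\epsilon,\delta)$, which directly gives the closed-form bound in the theorem after substituting the explicit $\EIG_{\min}$ from Theorem~\ref{thm:min-eig}.

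On the event $\{n<T\}$ the PAC condition fails under the current posterior, so Theorem~\ref{thm:min-eig} (applied to $p(r\mid\D_n)$) guarantees the existence of an initial state whose expected information gain about $E^{\pia_n}$ is at least $\EIG_{\min}$. Because PAC-EIG selects the maximiser over initial states, the query $\xi_{n+1}$ actually issued satisfies $I(\tau_{n+1};\, E^{\pia_n}\mid \D_n,\xi_{n+1})\geq \EIG_{\min}$. Since $\pia_n$ is a deterministic function of $\D_n$ and $E^{\pia_n}$ is in turn a deterministic function of the global regret configuration $E$ once $\pia_n$ is fixed, the conditional Markov chain $\tau_{n+1}\to E\to E^{\pia_n}$ together with the data-processing inequality upgrades this to $I(\tau_{n+1};\, E\mid \D_n,\xi_{n+1})\geq \EIG_{\min}$ on $\{n<T\}$. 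The fact that $E$ aggregates discretized regret across \emph{all} candidate $\pia$, as stipulated by the theorem, is precisely what makes this lift well-defined and keeps the entropy sequence $H(E\mid\D_n)$ monotone in expectation regardless of how $\pia_n$ drifts between steps.

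Finally, I would combine this per-step bound with the identity $\E[H(E\mid\D_{n+1})\mid \mathcal{F}_n] = H(E\mid\D_n) - I(\tau_{n+1};\, E\mid \D_n,\xi_{n+1})$, where $\mathcal{F}_n$ is the natural filtration of $(\D_n,\xi_{1:n})$. Taking expectations, summing the non-negative increments on $\{n<T\}$, and invoking monotone convergence,
\[
\EIG_{\min}\cdot \E[T] \leq \E\Bigl[\sum_{n=0}^{\infty}\bigl(H(E\mid\D_n) - H(E\mid\D_{n+1})\bigr)\mathbf{1}_{n<T}\Bigr] = \E\bigl[H(E\mid\D_0) - H(E\mid\D_T)\bigr] \leq h_{\max},
\]
where the final inequality uses $H(E\mid\D_T)\geq 0$ and $H(E\mid\D_0)\leq h_{\max}$. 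Rearranging yields the stated bound. The main technical obstacle will be the adaptive/stopping-time bookkeeping in this last step: the queries $\xi_{n+1}$ and $T$ itself depend on the past data, so the chain-rule identity has to be applied conditionally on $\mathcal{F}_n$ and then combined with a Fubini/monotone-convergence swap before one can cleanly extract $\E[T]$ from the telescoped sum.
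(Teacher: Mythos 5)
Your proposal is correct and follows essentially the same route as the paper's proof: a guaranteed per-step information gain about $E$ from Theorem~\ref{thm:min-eig} (lifted from $E^{\pia_n}$ to the aggregate $E$) telescoped against the bounded prior entropy $h_{\max}$. The only difference is that you make the stopping-time/filtration bookkeeping and the data-processing lift explicit, whereas the paper states these steps informally by treating the expected entropy sequence $H_n$ directly.
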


For the ternary discretization used in PAC-EIG, the maximum initial entropy is $\log(3)|\mathcal{S}|(|\mathcal{A}|-1)$ for any given policy and the associated variable $E^{\pia}$. However, since the policy may change during the optimization process, we get a total initial entropy across all possible policies of $h_{\text{max}} \leq \log(3)|\mathcal{S}||\mathcal{A}|(|\mathcal{A}|-1)$ (even though there are $|\as|^{|\sts|}$ possible policies, note that the ternary variable in a given state does not depend on the policy's actions in other states, leaving the number of distinct components of $E$ at $|\mathcal{S}||\mathcal{A}|(|\mathcal{A}|-1)$).

\paragraph{Extension to trajectory queries.} While the theorems establish a lower bound for single-state queries, this naturally extends to a trajectory-based version of our PAC-EIG acquisition function. When querying for a trajectory starting from state $s_0$, the information gained is at least as large as querying any single state visited along the trajectory. In practice, trajectories typically visit multiple informative states, potentially providing substantially more information than the theoretical lower bound suggests. However, an improved theoretical bound would need to build on additional assumptions about the environment and the prior.\footnote{For example, the environment may terminate after a single step, forcing the EIG from a trajectory to that of a single state.}

\section{Experiments}
\label{sec:experiment}
We evaluated the performance of the two proposed acquisition functions in a set of gridworld experiments with respect to both objectives introduced earlier: the reward learning objective, measured by the entropy of the posterior distribution over rewards, and the apprenticeship learning objective, measured by the regret. We also track the posterior distribution over regrets which directly relates to the PAC criterion.

We evaluate across three types of environment:
\begin{enumerate}
    \item \textbf{Structured gridworld}: Features fewer reward parameters than states. It includes a known goal state with a reward of +100, neutral states with a reward of -1, and three obstacle types with unknown negative rewards with a uniform prior between -100 and 0 independently for each obstacle type. This was meant as an illustrative example (Figure~\ref{fig:illustrative_pedagogical}) and a counterexample to the only prior method designed for collecting full trajectories, \emph{action entropy} \citep{kweon2023}, by including a jail state where all actions are equivalent and which always gets selected by this baseline, thus preventing any useful learning.
    \item \textbf{10x10 random gridworld with 2 initial states}: Each state has a random reward drawn from the prior, $\mathcal{N}(0,3)$, with only two possible initial states to test the ability of methods to recognize only relevant parts of the state space. Here we use $\beta=4$ so the expert behaves closely to optimal.
    \item \textbf{8x8 random gridworld with fully uniform initial states}: Each state has a random reward drawn from the prior, $\mathcal{N}(0,3)$, and the initial state distribution is uniform across all states. We use $\beta=2$ so the expert is fairly stochastic.
\end{enumerate}

We evaluate both the setting where each query results in a full expert trajectory, where we compare against the only prior method \citep{kweon2023} as well as random sampling, and the setting where each query results in a single state-action annotation, where we also evaluate against the methods by \citet{10.1007/978-3-642-04174-7_3} and \citet{brown2018}.


Each experiment type was run with 16 different random seeds (i.e. combinations of random reward functions, terminal states (except for the jail environment), and different 2 initial states in the 10x10 environment). The plots display the mean and the standard error across these 16 random instances. 

\subsection{Results}

Figure~\ref{fig:jail_results} shows results on the simple environment from Figure~\ref{fig:illustrative_pedagogical} to illustrate a crucial failure mode of the \textit{action entropy} \citep{kweon2023} acquisition function -- it always queries the jail state and thus fails to learn anything useful, while both reward and regret EIG learn an optimal policy within 10 steps in all 16 instances with similar posterior entropies.

Figure~\ref{fig:10x10_results} shows the results on the 10x10 gridworld with 2 random initial states and single-state annotations, Figure~\ref{fig:8x8_results} shows the results for querying single-state annotations on the 8x8 gridworld with a uniform initial state distribution, and Figure~\ref{fig:8x8_fulltraj_results} results on the 8x8 environment when querying trajectories of maximum length 5. 

In the case of only 2 initial states on the 10x10 gridworld, we can see our regret-focused $\nu$PAC-EIG acquisition function to perform much better in terms of both actual and posterior regret, reaching a zero true regret, as well as 0.1-0.1-PAC apprentice policy, by step 50. ActiveVaR remains competitive with RewardEIG in terms of the reward-learning objective (entropy of the reward posterior), but both fall behind $\nu$PAC-EIG in terms of true and posterior regret. While we do not display plain PAC-EIG for better readability, it performed comparably with RewardEIG and ActiveVaR indicating that it is the occupancy weighing that makes the difference here. It is interesting to observe that here the reward learning and apprenticeship objectives do diverge.

On the 8x8 gridworld with fully uniform initial states, we observe that both our information-theoretic acquisition functions result in lower posterior reward entropy \emph{and} lower regret than prior methods except for ActiveVaR, which seems to roughly match their performance. Interestingly, the reward-based acquisition function and the regret-based ones seem to perform similarly well on both objectives, suggesting that there is a strong correlation between learning about the reward and learning about the apprentice regret in this environment with uniform initial states.

The action entropy acquisition function still stops yielding significant improvements after about step 50 -- it again gets stuck querying states that have high action entropy due to multiple actions being similarly good, even if these states do not yield any more information.

\begin{figure*}[t]
    \begin{subfigure}[b]{0.32\textwidth}
        \centering
        \includegraphics[width=\textwidth]{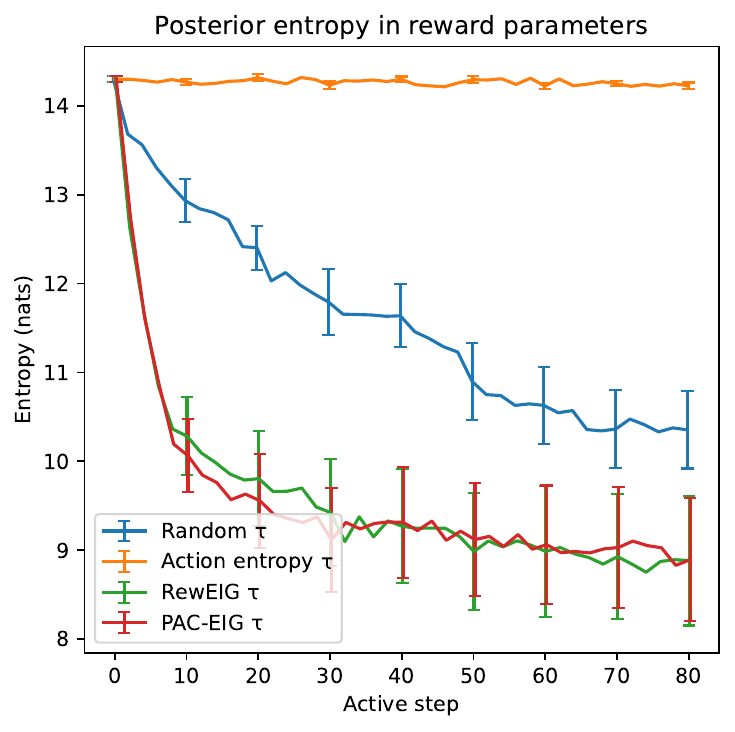}
        \caption{Posterior reward entropy}
        \label{fig:jail_entropy}
    \end{subfigure}
    \begin{subfigure}[b]{0.32\textwidth}
        \centering
        \includegraphics[width=\textwidth]{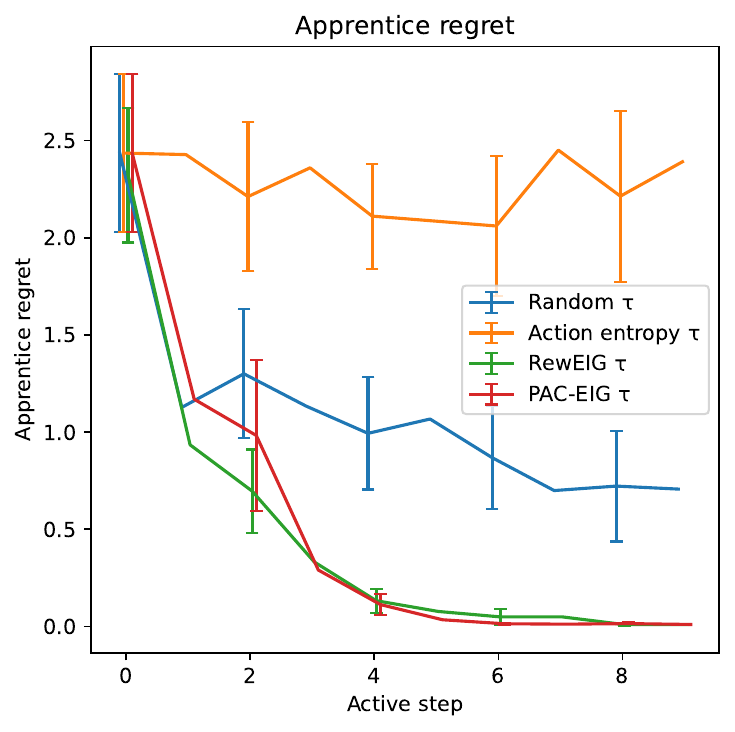}
        \caption{True regret}
        \label{fig:jail_regret}
    \end{subfigure}
    \begin{subfigure}[b]{0.32\textwidth}
        \centering
        \includegraphics[width=\textwidth]{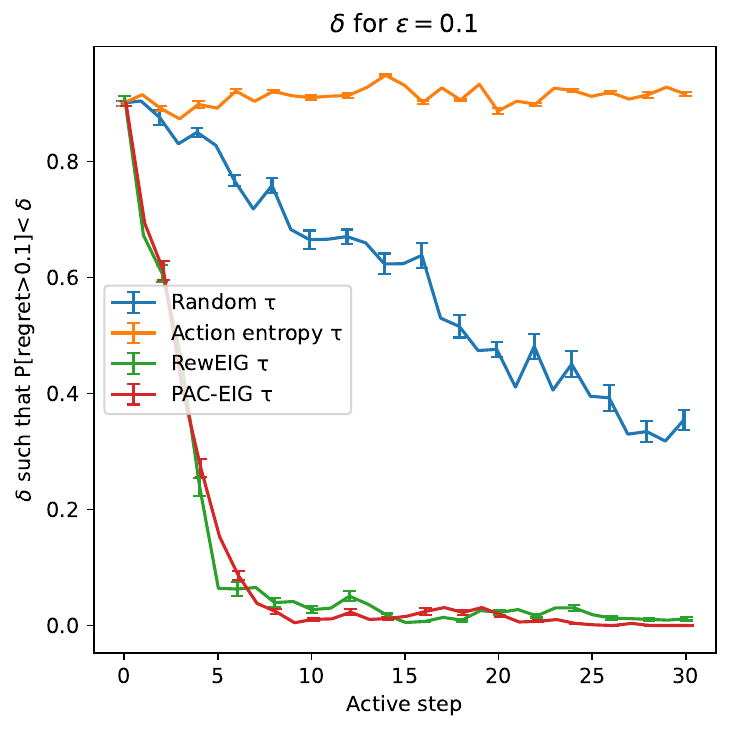}
        \caption{Posterior prob. of regret $>0.1$}
        \label{fig:jail_pac_prob}
    \end{subfigure}
    \caption{Results of the experiments on the environment with 3 cell types and a jail state with full-trajectory demonstrations.}
    \label{fig:jail_results}
\end{figure*}
\begin{figure*}[t]
    \begin{subfigure}[b]{0.32\textwidth}
        \centering
        \includegraphics[width=\textwidth]{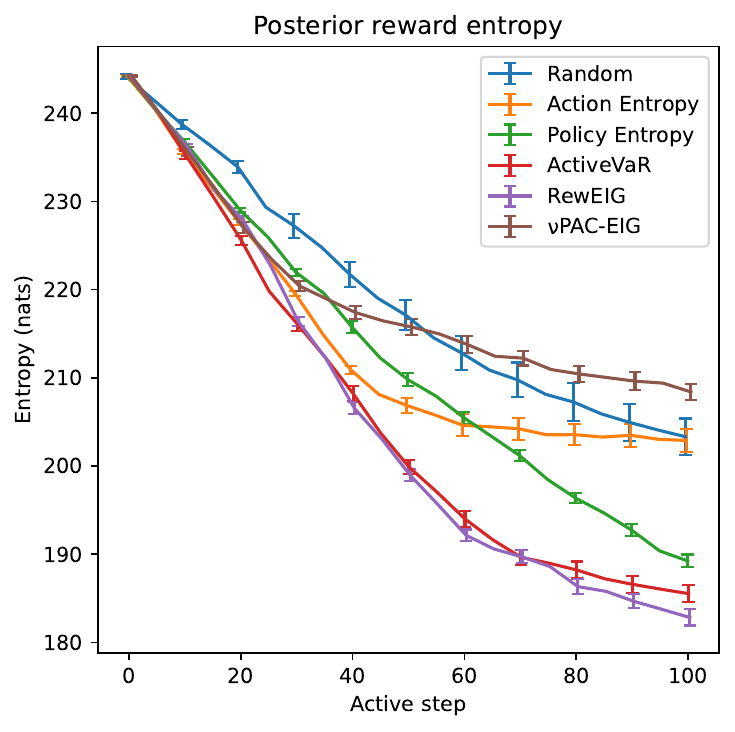}
        \caption{Posterior reward entropy}
        \label{fig:10x10_entropy}
    \end{subfigure}
    \begin{subfigure}[b]{0.32\textwidth}
        \centering
        \includegraphics[width=\textwidth]{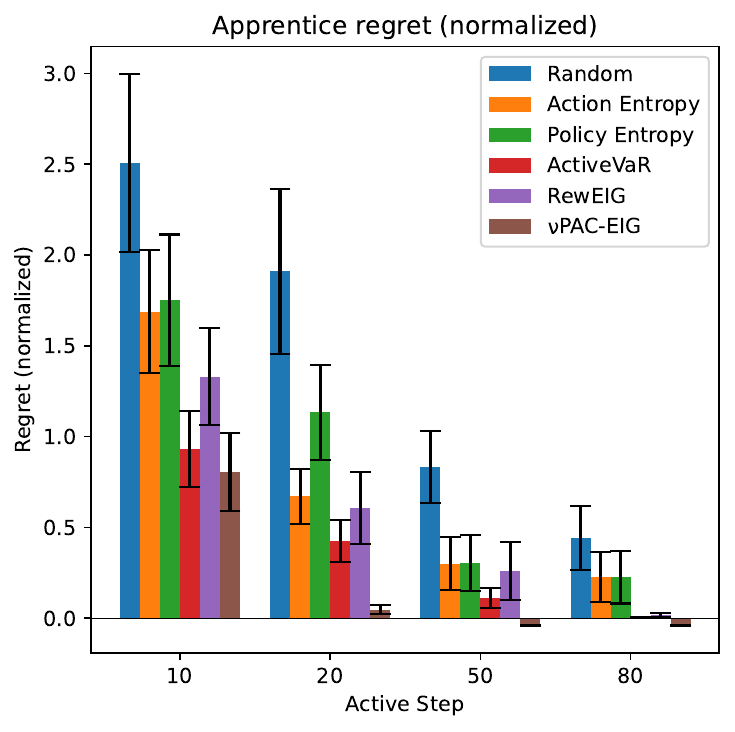}
        \caption{True regret}
        \label{fig:10x10_regret}
    \end{subfigure}
    \begin{subfigure}[b]{0.32\textwidth}
        \centering
        \includegraphics[width=\textwidth]{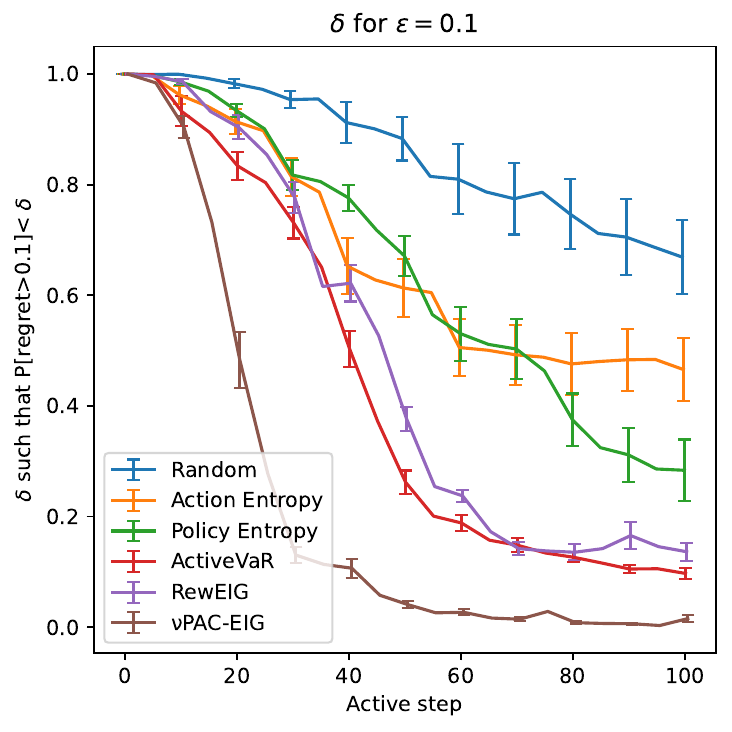}
        \caption{Posterior prob. of regret $>0.1$}
        \label{fig:10x10_pac_prob}
    \end{subfigure}
    \caption{Results of the experiments with single state annotations (i.e. $|\tau|=1$) on the 10x10 fully random gridworld with two initial states. In the barplot (b), results with zero regret are visualized below the horizontal axis to make their presence clearer.}
    \label{fig:10x10_results}
\end{figure*}
\begin{figure*}[t]
    \centering
    \begin{subfigure}[b]{0.32\textwidth}
        \centering
        \includegraphics[width=\textwidth]{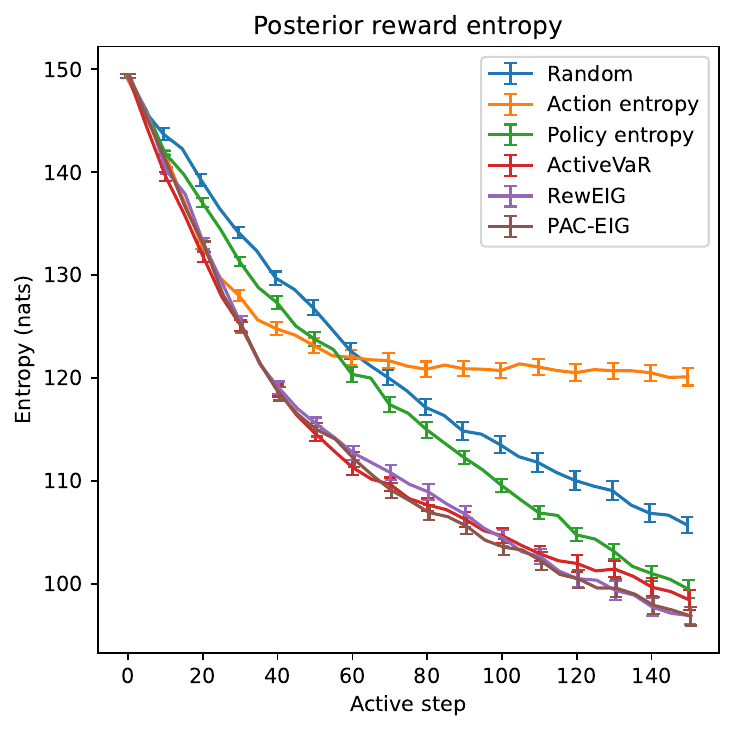}
        \caption{Posterior reward entropy}
        \label{fig:8x8_entropy}
    \end{subfigure}
    \hfill
    \begin{subfigure}[b]{0.32\textwidth}
        \centering
        \includegraphics[width=\textwidth]{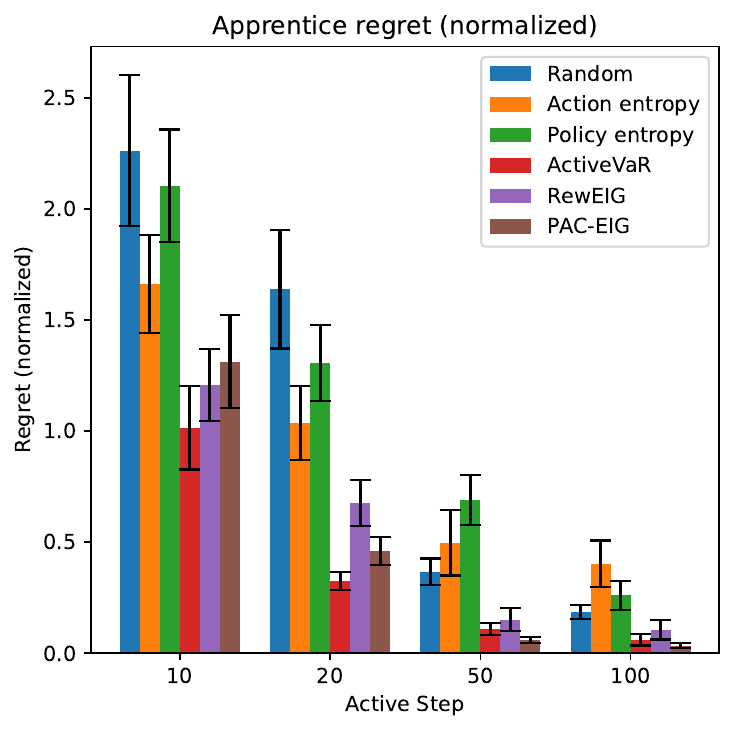}
        \caption{True regret}
        \label{fig:8x8_regret}
    \end{subfigure}
    \hfill
    \begin{subfigure}[b]{0.32\textwidth}
        \centering
        \includegraphics[width=\textwidth]{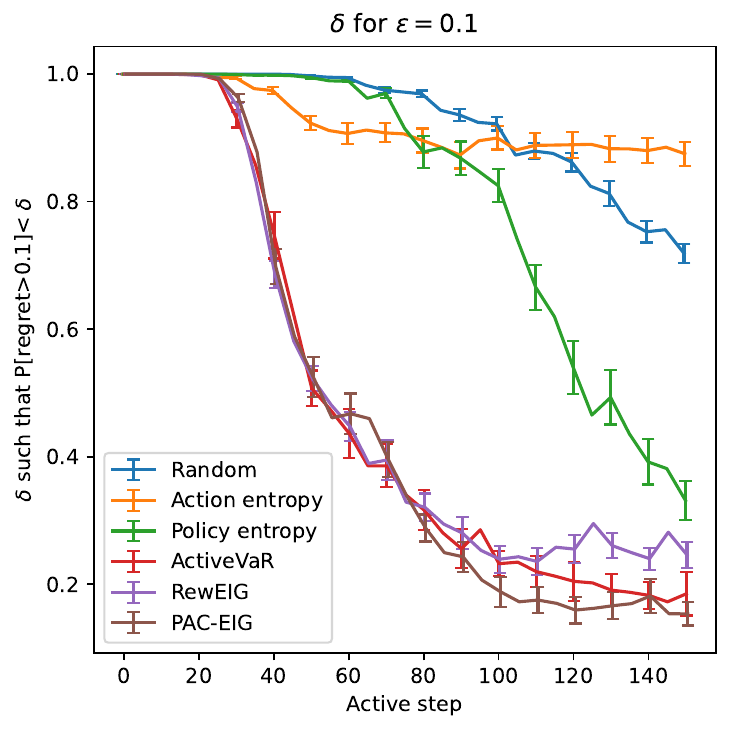}
        \caption{Posterior prob. of regret $>0.1$}
        \label{fig:8x8_pac_prob}
    \end{subfigure}
    \caption{Results of the experiments with single state annotations (i.e. $|\tau|=1$) on the 8x8 fully random gridworld.}
    \label{fig:8x8_results}
\end{figure*}
\begin{figure*}[t]
    \centering
    \begin{subfigure}[b]{0.32\textwidth}
        \centering
        \includegraphics[width=\textwidth]{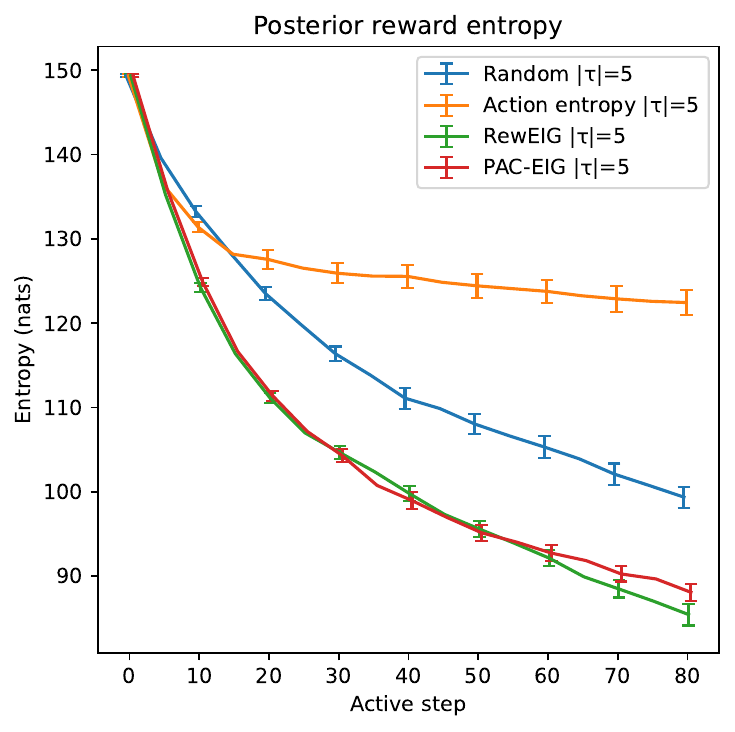}
        \caption{Posterior reward entropy}
        \label{fig:8x8_fulltraj_entropy}
    \end{subfigure}
    \hfill
    \begin{subfigure}[b]{0.32\textwidth}
        \centering
        \includegraphics[width=\textwidth]{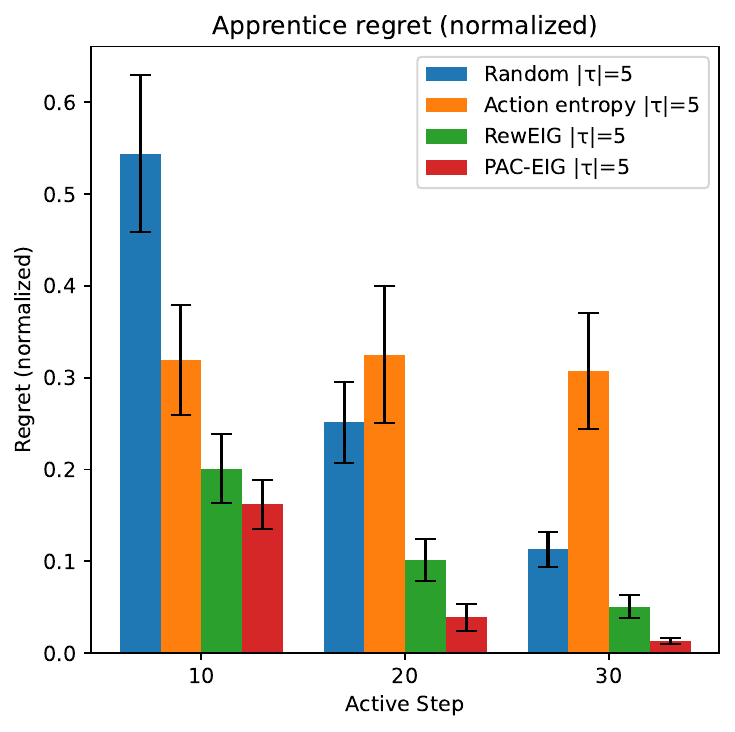}
        \caption{True regret}
        \label{fig:8x8_fulltraj_regret}
    \end{subfigure}
    \hfill
    \begin{subfigure}[b]{0.32\textwidth}
        \centering
        \includegraphics[width=\textwidth]{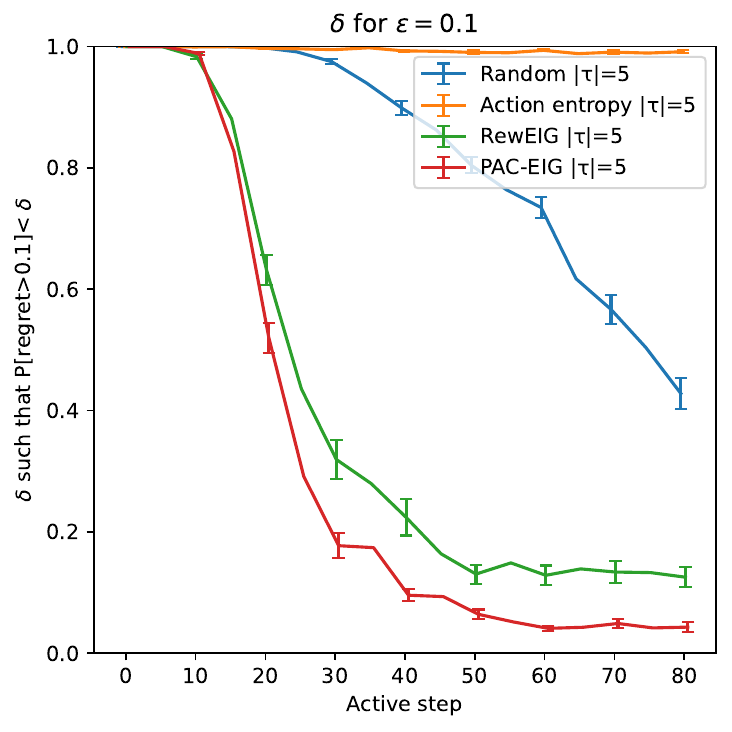}
        \caption{Posterior prob. of regret $>0.1$}
        \label{fig:8x8_fulltraj_pac_prob}
    \end{subfigure}
    \caption{Results of the experiments with expert trajectories of maximum length $|\tau|=5$ on the 8x8 fully random gridworld.}
    \label{fig:8x8_fulltraj_results}
\end{figure*}

\section{Discussion and conclusion}

In this paper, we have proposed new acquisition functions for active IRL, each geared toward one of two possible objectives: learning about an unknown reward function, or producing a well-performing apprentice policy. We have shown that across a set of gridworld experiments, our acquisition functions outperform or at least match prior methods on their respective objectives. Furthermore, our immediate-regret EIG acquisition function is the first acquisition function with a regret bound in our setting. While we have so far tested the methods only in finite state spaces, both of them were constructed to generalize also to continuous spaces, which will be addressed in future work.

\section*{Impact statement}
Through this paper, we hope to contribute to more effective and reliable learning of human preferences and values by AI systems, which aims to improve their alignment and facilitate their beneficial use. 

\bibliography{zotero_bibtex}
\bibliographystyle{rlj}

\newpage

\beginSupplementaryMaterials
\appendix

\section{Failure modes of prior methods}
\label{sup:failure-modes}
For each of the three prior methods for active IRL, we will now present an example of a simple environment where the method makes a clearly suboptimal choice with respect to at least one of the two objectives.

\paragraph{Policy entropy \citep{10.1007/978-3-642-04174-7_3}} As a reminder, the policy entropy acquisition function is $\alpha^{\text{Lopes}}=H(\pi^E)$, i.e. the entropy of the expert policy with respect to the current posterior over rewards (which induces a posterior over expert action probabilities). Consider an environment with two states $s_{0,1}$ each with two actions $a_{1,2}$ as shown in Figure \ref{fig:lopes_env}. We aim to illustrate a scenario where $\alpha^{\text{Lopes}}$ can misallocate budget, from the point of view of the apprenticeship learning objective, by focusing on states where the optimal action is already known, rather than those where crucial information about optimality is missing.

\begin{figure}
    \centering
    \includegraphics[width=0.5\linewidth, trim=0 300 0 300, clip]{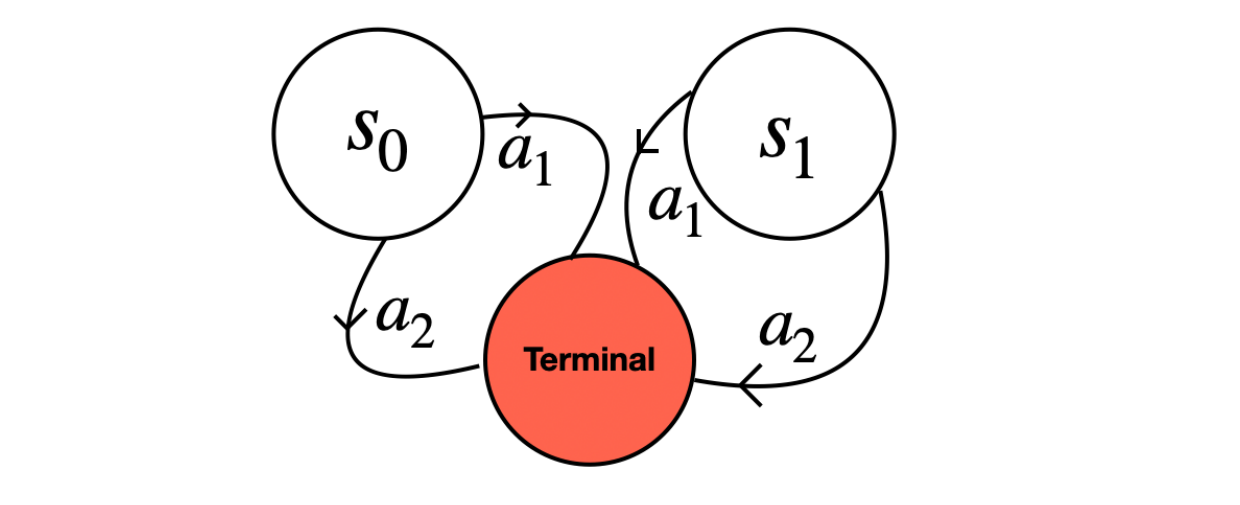}
    \caption{Two-state environment designed to illustrate a failure mode of \citet{10.1007/978-3-642-04174-7_3}}
    \label{fig:lopes_env}
\end{figure}

To demonstrate this effect, we define a \textit{discrete prior distribution over rewards}. This uncertainty in rewards will, in turn, induce a prior over the possible action probabilities for an optimal policy.
Suppose in state $s_1$ we have strong prior knowledge that $a_1$ is the optimal action; however, we are uncertain about the exact reward obtained by taking $a_1$
\begin{equation}
    P(R_{s_1, a_1}=5)=0.5\,, \quad P(R_{s_1,a_1}=7)=0.5 \,,
\end{equation}
and 
\begin{equation}
    P(R_{s_1, a_2}=1) = 1.0 \,.
\end{equation}
An optimal apprentice policy will always choose $a_1$ in state $s_1$. Despite this certainty, the uncertainty in the exact reward for $a_1$ means there is still uncertainty regarding the precise probability an optimal policy would assign to $a_1$, which leads to a high measure of policy uncertainty (as measured by $\alpha^{\text{Lopes}}$).

For state $s_0$, we set priors
\begin{equation}
    P(R_{s_0, a_1} = 2) = 0.1\,, \quad P(R_{s_0, a_1}=3)=0.9 \,, 
\end{equation}
and 
\begin{equation}
    P(R_{s_0, a_2}=2)=0.1 \,, \quad P(R_{s_0,a_2}=3)=0.9 \,.
\end{equation}
such that the optimal action is \textit{uncertain}. In this state, the learner faces true ambiguity about the best action, and it is therefore the state that a good active IRL method focused on improving the apprentice policy should query. However, since the acquisition function of \citet{10.1007/978-3-642-04174-7_3} is formulated using entropy of possible actions probabilities, examples of this type could have $\alpha^{\text{Lopes}}(s_0) < \alpha^{\text{Lopes}}(s_1)$, resulting in an inefficient use of budget. For example, given inverse temperature $\beta=2$, we obtain values
\begin{equation}
    \alpha^{\text{Lopes}}(s_0) = 0.860 \,, \quad \alpha^{\text{Lopes}}(s_1) = 1.0 \,,
\end{equation}
so this acquisition function would query $s_1$, where the policy \textit{already knows which action is optimal}, rather than $s_0$ where there is key information to be gained. By contrast, assuming single-state queries, our PAC acquisition function would choose $s_0$, since in state the regret is already known to be $0$ for the current apprentice policy so there is no regret information to be gained there.

\paragraph{ActiveVaR \citep{brown2018}} This acquisition function for getting single-state annotations is equal to a particular quantile of the posterior distribution over regret of the apprentice policy starting from that state. In this example, we use a 0.9 quantile, though the example is robust with respect to the exact value. Consider an environment with two states labelled $s_{0,1}$ and two actions $a_{1,2}$ as shown in Figure \ref{fig:brown_env}. Both actions in state $s_0$ lead to $s_1$, one with reward $+2$ and the second with $-2$ (but we do not know which is which). In $s_1$, both actions lead to a terminal state, and give a reward of $-10$ and $+10$. Since the potential downside of any policy is maximal at $s_0$ ($-12$), the acquisition function would query $s_0$. On the other hand, querying state $s_1$ to distinguish the $\pm10$ rewards would yield a greater reduction in expected regret. 

\begin{figure}
    \centering
    \includegraphics[width=0.4\linewidth, trim=0 250 0 250, clip]{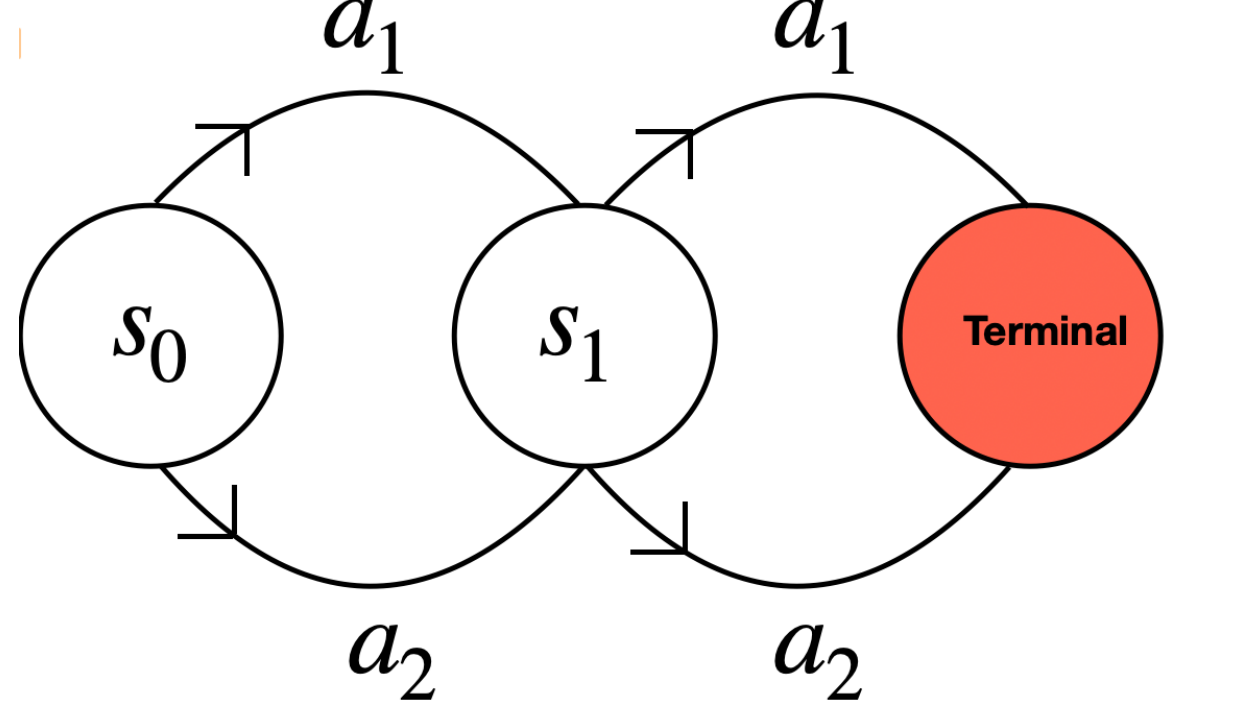}

    \caption{Two state environment to demonstrate a failure mode of \citet{brown2018}.}
    \label{fig:brown_env}
\end{figure}

To get even more concrete, consider an intermediate policy which \textit{knows} the absolute values of all the rewards, but not the relative signs: (i.e. $(+2,-2$) and $(-2,+2)$ are equally likely for $\left(r(s_0,a_1), r(s_0,a_2)\right)$, as are $(+10,-10)$ and $(-10, +10)$ for $\left(r(s_1, a_1), r(s_1, a_2)\right)$. We can easily compute
\begin{equation}
    \alpha^{\text{Brown}}(s_0) = 2 + \gamma 10 \,, \quad \alpha^{\text{Brown}}(s_1) = 10 \,,
\end{equation}
so for a sufficiently large discount factor, state $s_0$ would be queried by this acquisition function. 
We can compute the reduction in expected regret after querying each of these states. The initial expected total regret for any apprentice policy, averaged over a uniform initial state distribution is
\begin{equation*}
    \mathbb{E}_r[R_{\pia,r}] = \frac{1}{2} \big( V^*(s_0) - V^\pi(s_0) \big) + \frac{1}{2} \big(  V^*(s_1) - V^\pi(s_1) \big) = \frac{1}{2}( 2 + \gamma 10 - 0) + \frac{1}{2}(10 - 0)=6+10\gamma\,.
\end{equation*}
If we query the expert at $s_0$ and as a result switch the apprentice action at $s_0$ to the optimal one (which means we get a reward of $2$ in $s_0$ while still getting an expectation of $0$ in $s_1$, we get an expected regret of
\begin{equation*}
  \frac{1}{2} \big( (2 + \gamma10) -2 \big) + \frac{1}{2} \big(10 - 0 \big) = 5+5\gamma \,
\end{equation*}
while if we query at $s_1$ we get an expected regret of
\begin{equation*}
  \frac{1}{2} \big((2+\gamma 10) -\gamma10 \big) + \frac{1}{2} \big(10 - 10 \big) = 1 \,.
\end{equation*}
We therefore observe that whilst \citet{brown2018} would query $s_0$, querying $s_1$ yields a greater reduction in expected regret and also better tightening of the PAC condition (achieving it for any $\epsilon>1$ and any $\delta$). For a sufficiently high $\epsilon$, our PAC-EIG acquisition function would correctly query $s_1$ (though for a small $\epsilon$, it would recognize that both states need to be queried to satisfy the PAC criterion and would be indifferent between them).

While this is a possible conceptual objection to this acquisition function (which could be addressed by moving from regret to our immediate regret), we think this acquisition function is a strong option, which is also confirmed by empirical results.

\paragraph{Action Entropy \citep{kweon2023}} In its single-state-annotation version, the value of this acquisition function in a state is equal to the entropy of the posterior predictive distribution over expert actions in that state. Consider a situation where the learner has perfect knowledge of the action values in a particular state, but at least three actions in this state are equivalent (result in the same reward and next state distribution) and tied as optimal. Then the expert Boltzmann policy will assign uniform probabilities among these actions and due to high action entropy this state will always be queried in favour of any where one of two actions is best, but the learner does not know which of the two (assuming all other actions are known to be strongly suboptimal and thus unlikely to get queried). We offer an example of this in Figure \ref{fig:illustrative_pedagogical} which renders this acquisition function useless since it will only ever query the jail cell without gaining any information (and this is the case in both the single-state-annotation version, and a trajectory based version), while both our acquisition functions keep gathering useful information and improving the both the posterior reward entropy and the regret of the apprentice policy (including the PAC bounds).

For the single-state-annotation version, an even stronger counter example applies: if we allow querying terminal states in any environment that has them, the method always queries these, since actions have no effect, so a Boltzmann rational policy would be uniform and thus have maximum entropy.

\section{Theoretical Analysis}
\label{sup:theory}

We will establish an upper bound on the expected number of expert demonstrations needed to find a policy satisfying an $\epsilon$-$\delta$-PAC criterion using the PAC-EIG acquisition function. As outlined in Section~\ref{sec:method}, we will assume that during the learning process, the apprentice policy is one that in each state, maximizes the probability of taking an optimal action.
The proof strategy proceeds in three steps:
\begin{enumerate}
    \item First, we show that if a policy has a significant (overall) regret, there must exist a state where the policy's action is significantly suboptimal in terms of immediate regret. (Lemma~\ref{lemma:q-val-difference})
    
    \item Building on this, we prove that if a policy is not $(\epsilon,\delta)$-PAC, then there exists a state where the difference between optimal and apprentice policy's optimal Q-values is lower-bounded by $\epsilon(1-\gamma)$ with probability at least $\delta/|\sts|$.
    
    \item Finally, we show that in such cases, observing an expert demonstration from an appropriately chosen initial state provides a guaranteed minimum amount of information about whether the policy is approximately correct. Since wea function of can only gain a finite amount of information (bounded by the entropy of our prior), this leads to a bound on the number of demonstrations needed.
\end{enumerate}

We begin with our first lemma, which connects overall policy regret to statewise immediate regret, i.e. differences in optimal Q-values. This could be viewed as a corollary of the Performance Difference Lemma~\citet{kakade2002}, but we will prove it directly for completeness.

\begin{lemma}
\label{lemma:q-val-difference}
Let $\pi$ be any policy, $r$ any reward function, and
\[
R^\pi_r = \E_{s_0 \sim \rho_0} \left[ V^*_r(s_0) - V^\pi_r(s_0) \right] \geq 0,
\]
the regret of that policy. Then there exists a state $s \in \sts$ such that
\[
R^*_{\pi,r}(s):=Q^*_r(s, \pi^*_r(s)) - Q^*_r(s, \pi(s)) \geq (1 - \gamma) R^\pi_r.
\]
\end{lemma}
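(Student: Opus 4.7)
The plan is to apply the performance difference lemma (or equivalently, telescope the value difference $V^* - V^\pi$) to rewrite the regret $R^\pi_r$ as an expected discounted sum of immediate regrets, and then use a pigeonhole/maximization argument to extract a state where the immediate regret is large.

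Concretely, I would first derive the identity
\begin{equation*}
    V^*_r(s_0) - V^\pi_r(s_0) = R^*_{\pi,r}(s_0) + \gamma\,\E_{s_1 \sim p(\cdot\mid s_0,\pi(s_0))}\bigl[V^*_r(s_1) - V^\pi_r(s_1)\bigr],
\end{equation*}
which follows by inserting and subtracting $Q^*_r(s_0,\pi(s_0))$ and using that $V^\pi_r(s_0) = Q^\pi_r(s_0,\pi(s_0))$ differs from $Q^*_r(s_0,\pi(s_0))$ exactly by a $\gamma$-discounted expectation of $V^*_r - V^\pi_r$ at the next state. Unrolling this recursion along trajectories generated by $\pi$ and taking the expectation over $s_0 \sim \rho_0$ yields
\begin{equation*}
    R^\pi_r = \E_{s_0 \sim \rho_0,\,\tau \sim \pi}\!\left[\sum_{t=0}^{\infty} \gamma^t R^*_{\pi,r}(s_t)\right].
\end{equation*}

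With this representation in hand, the remainder is short. Since $R^*_{\pi,r}(s) \geq 0$ for every $s$ (as $\pi^*_r$ is an argmax of $Q^*_r(s,\cdot)$), the right-hand side is upper bounded by $\bigl(\max_{s \in \sts} R^*_{\pi,r}(s)\bigr) \sum_{t=0}^{\infty} \gamma^t = \max_{s} R^*_{\pi,r}(s)/(1-\gamma)$. Rearranging gives $\max_s R^*_{\pi,r}(s) \geq (1-\gamma) R^\pi_r$, and any maximizing state serves as the required $s$.

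The only subtlety, and the one I would be careful with, is the initial telescoping step: deriving the recursion for $V^*_r - V^\pi_r$ in terms of $R^*_{\pi,r}$ (rather than the more standard advantage $A^\pi(s,\pi^*(s))$) requires adding and subtracting $Q^*_r(s,\pi(s))$ at each level and relies on the fact that $Q^*_r(s,a) - Q^\pi_r(s,a) = \gamma\,\E_{s' \sim p(\cdot\mid s,a)}[V^*_r(s') - V^\pi_r(s')]$ for \emph{any} action $a$, in particular $a = \pi(s)$. I also need the series $\sum_t \gamma^t$ to converge, which is guaranteed by $\gamma \in (0,1)$; in the finite-horizon case with $\tmax < \infty$, the same bound holds a fortiori since the geometric sum is replaced by a partial sum.
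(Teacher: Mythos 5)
Your proof is correct and rests on exactly the same one-step decomposition as the paper's: add and subtract $Q^*_r(s,\pi(s))$ to get $V^*_r(s)-V^\pi_r(s)=R^*_{\pi,r}(s)+\gamma\,\E_{s'}[V^*_r(s')-V^\pi_r(s')]$. The only (harmless) difference is how the recursion is closed: you unroll it into the exact performance-difference identity $R^\pi_r=\E_\tau[\sum_t\gamma^t R^*_{\pi,r}(s_t)]$ and bound each term by the max, whereas the paper takes the max over $s$ on both sides of the one-step inequality and solves $M\leq\Delta_Q+\gamma M$ -- both yield the same $(1-\gamma)$ factor.
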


\begin{proof}
Let us define
\[
\Delta_Q = \max_{s \in \sts} \left[ Q^*_r(s, \pi^*_r(s)) - Q^*_r(s, \pi(s)) \right].
\]
We will prove the lemma by showing that $R^\pi_r \leq \Delta_Q / (1 - \gamma)$.

Since $Q^\pi_r(s, \pi(s)) \leq Q^*_r(s, \pi(s))$ (because $Q^*_r$ is the optimal Q-function), we have
\begin{align*}
V^*_r(s) - V^\pi_r(s) &= Q^*_r(s, \pi^*_r(s)) - Q^\pi_r(s, \pi(s)) \\
&\geq Q^*_r(s, \pi^*_r(s)) - Q^*_r(s, \pi(s)) \\
&\geq 0 \,.
\end{align*}
Using the Bellman equation, for any state $s\in\sts$ we can write 
\begin{align*}
     V^*_r(s) - V^\pi_r(s) &= Q^*_r(s, \pi^*_r(s)) - Q^\pi_r(s, \pi(s)) \\
     &= Q^*_r(s, \pi^*_r(s)) - Q^*_r(s, \pi(s)) + Q^*_r(s, \pi(s)) - Q^\pi_r(s, \pi(s)) \\
     &\leq \Delta_Q + \bigl(r(s, \pi(s)) + \gamma \E_{s'|s,\pi(s)} [V^*_r(s')]\bigr) - \bigl(r(s, \pi(s)) + \gamma \E_{s'|s,\pi(s)}V^\pi_r(s')]\bigr) \\
     &= \Delta_Q + \gamma \E_{s'|s,\pi(s)} [V^*_r(s')-V^\pi_r(s')] \\
    &\leq \Delta_Q + \gamma \max_{s'} [V^*_r(s')-V^\pi_r(s')].
\end{align*}
Here, the first equality just replaces state values by the corresponding Q-values, the second line adds and subtracts the same term, the third line uses the definition of $\Delta_Q$ for the first term and expands the latter two Q-values using the Bellman equation, the fourth just cancels out the repeated reward term. The final inequality follows because the expectation over next states is bounded by the maximum.

Since this inequality holds for all $s\in\sts$, it holds also for the state maximizing the left-hand side, so we get
\begin{equation}
     \max_{s} [V^*_r(s)-V^\pi_r(s)] \leq
     \Delta_Q + \gamma \max_{s'} [V^*_r(s')-V^\pi_r(s')]\;,
\end{equation}
which can be readily rearranged into
\[ \max_{s} [V^*_r(s)-V^\pi_r(s)]  \leq \frac{\Delta_Q}{1-\gamma}. \] 
Thus
\begin{align}
R^\pi_r &= \E_{s_0 \sim \rho_0} \left[ V^*_r(s_0) - V^\pi_r(s_0) \right] \leq \max_{s} [V^*_r(s)-V^\pi_r(s)] \\ 
& \leq \frac{\Delta_Q}{1-\gamma} = \frac{1}{1-\gamma} \max_{s \in \sts} \left[ Q^*_r(s, \pi^*_r(s)) - Q^*_r(s, \pi(s)) \right],
\end{align}
which completes the proof.
\end{proof}

\begin{lemma}
\label{lemma:probabilistic-q-val-diff}
Let $\pi$ be the apprentice policy at step $n$. For any $\delta \in (0,\frac{1}{2}]$, let $R_{n,\delta}^\pi$ be the $(1-\delta)$-quantile of the regret distribution with respect to the current posterior distribution over rewards, i.e., $R_{n,\delta}^\pi$ satisfies
\[
\prob_{r|\D_n} [R^\pi_r \geq R_{n,\delta}^\pi] = \delta.
\]

Then, there exists a state $s \in \sts$ such that
\[
\prob_{r|\D_n} \left[ Q^*_r(s, \pi^*_r(s)) - Q^*_r(s, \pi(s)) \geq (1 - \gamma) R_{n,\delta}^\pi \right] \geq \frac{\delta}{|\sts|}.
\]
\end{lemma}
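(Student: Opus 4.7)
The plan is to leverage Lemma~\ref{lemma:q-val-difference} in a probabilistic way, essentially lifting its deterministic "there exists a state" conclusion to a probability statement via a union-bound / pigeonhole argument.

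First I would define the event $A = \{r : R^\pi_r \geq R^\pi_{n,\delta}\}$, which by the definition of the quantile has posterior probability exactly $\delta$. For each state $s \in \sts$, I would also define the event
\[
B_s = \bigl\{r : Q^*_r(s, \pi^*_r(s)) - Q^*_r(s, \pi(s)) \geq (1-\gamma)R^\pi_{n,\delta}\bigr\}.
\]
The conclusion of the lemma is exactly that $\max_{s \in \sts} \prob_{r|\D_n}(B_s) \geq \delta/|\sts|$.

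Next I would apply Lemma~\ref{lemma:q-val-difference} pointwise: for any $r \in A$, the lemma guarantees the existence of some $s(r) \in \sts$ with $Q^*_r(s(r), \pi^*_r(s(r))) - Q^*_r(s(r), \pi(s(r))) \geq (1-\gamma) R^\pi_r \geq (1-\gamma) R^\pi_{n,\delta}$, which is to say $r \in B_{s(r)}$. Therefore $A \subseteq \bigcup_{s \in \sts} B_s$, and by a union bound
\[
\delta = \prob_{r|\D_n}(A) \leq \sum_{s \in \sts} \prob_{r|\D_n}(B_s) \leq |\sts| \max_{s \in \sts} \prob_{r|\D_n}(B_s),
\]
from which the pigeonhole gives a state $s^\star$ achieving $\prob_{r|\D_n}(B_{s^\star}) \geq \delta/|\sts|$, as required.

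The argument is essentially routine once Lemma~\ref{lemma:q-val-difference} is in hand; the only subtlety to watch out for is measurability of the map $r \mapsto s(r)$, which is immediate here since $\sts$ is finite and each $B_s$ is defined by a measurable inequality on $r$, so $A$ is covered by finitely many measurable events and the union bound applies without issue. No real obstacle is expected, and no structural property of the apprentice policy $\pi$ (such as the specific choice in~(\ref{eq:eps-half-apprentice})) is needed for this step — the lemma holds for an arbitrary policy.
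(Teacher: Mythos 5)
Your proof is correct and follows essentially the same route as the paper's: both apply Lemma~\ref{lemma:q-val-difference} pointwise over the high-regret event of posterior mass $\delta$ and then use the pigeonhole principle (your union bound over the events $B_s$ is the same counting argument the paper phrases via the sets $\sts_r$). Your closing observations — that measurability is a non-issue over a finite state space and that no structural property of $\pi$ is needed — are also consistent with the paper, which states the lemma for an arbitrary apprentice policy.
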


\begin{proof}
Let us define the set of reward functions under which $\pi$ has high regret:
\[
\mathcal{H} = \left\{ r : R^\pi_r \geq R_{n,\delta}^\pi \right\}.
\]
By definition of the quantile $R_{n,\delta}^\pi$, we have
\[
\prob_{r|\D_n} [r \in \mathcal{H}] = \delta.
\]

For each $r \in \mathcal{H}$, applying Lemma~\ref{lemma:q-val-difference}, we know there exists a state $s_r \in \sts$ such that
\[
Q^*_r(s_r, \pi^*_r(s_r)) - Q^*_r(s_r, \pi(s_r)) \geq (1 - \gamma) R^\pi_r \geq (1 - \gamma) R_{n,\delta}^\pi.
\]
Let $\sts_r$ be the set of such states for a reward function $r \in \mathcal{H}$ and $\sts_\mathcal{H}$ the collection of such states across all rewards $r \in \mathcal{H}$. Since the state space $\sts$ is finite with cardinality $|\sts|$, by the pigeonhole principle, there must exist at least one state $s \in \sts$ such that
\[
\prob_{r|\D_n} \left[ r \in \mathcal{H} \text{ and } s \in \sts_r \right] \geq \frac{\delta}{|\sts|}.
\]

For this state $s$, whenever $r \in \mathcal{H}$ and $s\in\sts_r$, we have
\[
Q^*_r(s, \pi^*_r(s)) - Q^*_r(s, \pi(s)) \geq (1 - \gamma) R_{n,\delta}^\pi.
\]

Therefore,
\[
\prob_{r|\D_n} \left[ Q^*_r(s, \pi^*_r(s)) - Q^*_r(s, \pi(s)) \geq (1 - \gamma) R_{n,\delta}^\pi \right] \geq \frac{\delta}{|\sts|},
\]
which completes the proof.
\end{proof}

This lemma extends our previous result to the probabilistic setting of Bayesian IRL. While Lemma~\ref{lemma:q-val-difference} showed that high regret implies the existence of a state with poor action choice, this lemma shows that if our policy has a significant probability of high regret, there must be at least one state where it has a significant probability of making a poor action choice.

Now we will build on this lemma to formulate our first theorem to take a step further: if we apply this lemma to a policy that has a significant probability of being approximately correct in each state, but the lemma also gives us a state where it has a significant probability of making a poor choice, we have two contradictory hypotheses that an expert demonstration can help us resolve -- we formalize this as a lower bound on the information gain from observing the expert action in such a state.

\thmmineig*


\begin{proof}
Let $\pia$ be an apprentice policy that in each state maximizes the probability of taking an optimal action, i.e., $ \pia(s)\in\argmax_a \prob[Q^*(s,\pi^*(s)) - Q^*(s,a)=0|\D_n],\;\forall s\in\sts$. To informally outline our proof strategy: we will prove the theorem by showing that under its assumptions, there exists a state $s$ and an alternative action $a'\neq\pia(s)$ that has a chance of being significantly better than the apprentice action. If $a'$ is significantly better, it is significantly more likely to get selected by the expert than if the apprentice action is optimal. Since the observation distributions in the two cases are different, this allows us to put a lower bound on the expected information gained by observing the expert at $s$. Now let us turn to the proof in full detail.

Under the assumptions of this theorem and using Lemma~\ref{lemma:probabilistic-q-val-diff}, there exists a state $s$ such that
\[
\prob_{r|\D_n} \left[ Q^*_r(s, \pi^*_r(s)) - Q^*_r(s, \pia(s)) \geq (1 - \gamma) \epsilon \right] \geq \frac{\delta}{|\sts|}.
\]
Going forward, let us fix $s$ to denote one such state, and let us denote by $e_*$ the event on the left, $\left\{Q^*_r(s, \pi^*_r(s)) - Q^*_r(s, \pia(s)) \geq (1 - \gamma) \epsilon\right\}$. 
Conditioned on $e_*$, one of the $|\as|-1$ alternative actions must be optimal so by the pigeonhole principle, there must exist an action that, conditional on $e_*$ has probability of at least $\frac{1}{|\as|-1}$ of being optimal. Going forward, let us fix $a'$ to be such an action. This implies that
\begin{equation}
    \label{eq:en-lb}
    \prob\!\left[\,\en\right]:=\prob\!\left[\,Q^*_r(s, a') - Q^*_r(s, \pia(s))\geq (1 - \gamma) \epsilon \right]\geq\frac{\delta}{|\sts|(|\as|-1)}.
\end{equation}
($\en$ standing for the apprentice action \emph{not} being approximately correct relative to $a'$) since the action $a'$ being optimal is a subset of the event $\en$ conditioned on $e_*$. 

On the other hand, since $\pia$ was chosen as the policy maximizing the probability of being "correct", we must have
\begin{equation*}
  \prob\left[  Q^*_r(s, \pi^*_r(s)) - Q^*_r(s, \pia(s)) = 0 \right] \geq \frac{1}{|\as|},
\end{equation*}
since one of the $\as$ actions must be optimal in each state, so by the pigeonhole principle, there must exist at least one action that has a probability of at least $\frac{1}{|\as|}$ of being optimal.

This implies that with probability at least $1/|\as|$, $\pia(s)$ is an optimal action, and for any other action $a'$, $Q^*_r(s, a') - Q^*_r(s, \pia(s)) \leq 0$. Let us denote the event that $\pia(s)$ is optimal, or "correct", by $\ec$, so
\begin{equation}
  \label{eq:ec-lb}
  \prob\left[ \ec \right] \geq \frac{1}{|\as|}.
\end{equation}
For completeness, we can denote by $\ea$ the "approximately correct" complement of the events $\ec$ and $\en$.

Now, if we denote by $A$ the action that the expert would take in state $s$ seen as a random variable, we can decompose the mutual information between $A$ and the ternary variable $E^\pia_{s,a'}$ that can take the values $\ec$, $\ea$, or $\en$ as
\begin{align*}
I(A;E^\pia_{s,a'}) &= \sum_{e\in\{\ec,\ea,\en\}}\prob[e] D_{\text{KL}}(p(A|e)\| p(A)) \\
&\geq \prob[\ec] D_{\text{KL}}(p(A|\ec)\| p(A)) + \prob[\en] D_{\text{KL}}(p(A|\en)\| p(A))
\end{align*}

To finish the proof, we need to put a lower bound on this mutual information. We have already given a lower bound on $\prob[\en]$ (Eq.~\ref{eq:en-lb}) and  $\prob[\ec]$ (Eq.~\ref{eq:ec-lb}). We will now provide a lower bound on the sum of the corresponding KL terms by first lower-bounding the total variation distance between $p(A|\en)$ and $p(A|\ec)$.

In the event $\ec$, we have $Q^*_r(s,\pia(s)) \geq Q^*_r(s, a')$, so under the Boltzmann-rational policy, for any given reward $r_c$ compatible with $\ec$, we have
\begin{align*}
    \prob\left[A=\pia(s)|r_c\right] \geq \prob\left[A=a'|r_c\right].
\end{align*}

On the other hand, in the case of $\en$ and a reward $r_{\text{n}}$ compatible with it, we have $Q^*_r(s, a') \geq Q^*_r(s, \pia(s)) + (1-\gamma)\epsilon$. Thus, under the Boltzmann-rational expert policy, we have
\begin{equation*}
    \prob\left[A=a'|r_{\text{n}}\right] \geq e^{\beta(1-\gamma)\epsilon} \prob\left[A=\pia(s)|r_{\text{n}}\right]\;,
\end{equation*}
so 
\begin{equation*}
    \prob\left[A=a'|r_{\text{n}}\right] -  \prob\left[A=\pia(s)|r_{\text{n}}\right] \geq (1-e^{-\beta(1-\gamma)\epsilon}) \prob\left[A=a'|r_{\text{n}}\right].
\end{equation*}
Marginalizing over the reward, we get
\begin{align}
    \label{eq:en-prob-diff}
    \prob\!\left[A=a'|\en\right] - \prob\!\left[A=\pia(s)|\en\right] &\geq (1-e^{-\beta(1-\gamma)\epsilon}) \prob\left[A=a'|\en\right] .
\end{align}
If $a'$ is optimal, its probability of being selected by the expert is at least $1/|\as|$. In fact, conditional on $\en$, its probability must be \emph{strictly} greater than that of $\pia(s)$, so its probability of being selected if optimal is strictly greater than $\frac{1}{|\as|}$. Since $a'$ was chosen so that it has a probability of at least $\frac{1}{|\as|-1}$ of being optimal conditional on $e_*$, it must also have a probability of at least $\frac{1}{|\as|-1}$ conditional on $\en$ (since $\en\subseteq e_*$ and $\{a\;\text{optimal}\}\cap e_* \subseteq e_n$). Thus, we have
\begin{align*}
    \prob\left[A=a'|\en\right] &=  \prob\left[A=a'|a'\;\text{optimal and }\en\right] \prob\left[a'\;\text{optimal }|\en\right] \\&\;\;\;\;+  \prob\left[A=a'|a' \text{ not optimal and }\en\right]\prob\left[a'\;\text{ not optimal }|\en\right] \\
    &\geq \prob\left[A=a'|a'\;\text{optimal and }\en\right] \prob\left[a'\;\text{optimal }|\en\right] \\
    &> \frac{1}{|\as|(|\as|-1)}.
\end{align*}    

Combining this with Equation~\ref{eq:en-prob-diff} gives us
\begin{align*}
    \prob\!\left[A=a'|\en\right] - \prob\!\left[A=\pia(s)|\en\right] &> \frac{1-e^{-\beta(1-\gamma)\epsilon}}{|\as|(|\as|-1)}   \;.
\end{align*}

We can use this to put a lower bound on the total variation distance between the posterior predictive distributions under the events $\en$ and $\ec$:
\begin{align*}
D_{\text{TV}}&\left(p(A|\ec),p(A|\en)\right) \\
    &\geq\frac{1}{2} \biggl( \Bigl| \prob\left[A=\pia(s)|\ec\right] - \prob\left[A=\pia(s)|\en\right]\Bigr| + \Bigl|\prob\left[A=a'|\en\right] - \prob\left[A=a'|\ec\right]\Bigr| \biggr) \\
    &\geq \frac{1}{2} \biggl(\Bigl(\prob\left[A=a'|\en\right] - \prob\left[A=\pia(s)|\en\right]\Bigr) - \Bigl(\prob\left[A=a'|\ec\right]-\prob\left[A=\pia(s)|\ec\right]\Bigr)\biggr) \\
    &> \frac{1}{2} \frac{1-e^{-\beta(1-\gamma)\epsilon}}{|\as|(|\as|-1)}
\end{align*}
where we used the definition of total variation distance in the first step (omitting non-negative terms corresponding to actions other than $\pia(s)$ and $a'$), the triangle inequality, dropping absolute value, and rearranging terms in the second step, and, in the last step, using the lower bound we just derived and dropping the second term, since $\prob\left[A=a'|\ec\right]-\prob\left[A=\pia(s)|\ec\right]\leq 0$.

Applying Pinsker's inequality gives us
\begin{align*}
D_{\text{KL}}(p(A|\ec)\|p(A)) + D_{\text{KL}}(p(A|\en)\|p(A)) &\geq 
2(D_{\text{TV}}(p(A|\ec), p(A))^2 + D_{\text{TV}}(p(A|\en),p(A))^2) \\
    &\geq
    (D_{\text{TV}}(p(A|\ec), p(A)) + D_{\text{TV}}(p(A|\en),p(A)))^2 \\
    &\geq D_{\text{TV}}(p(A|\en),p(A|\ec)))^2 \\
    &> \frac{1}{4|\as|^2(|\as|-1)^2} (1- e^{-\beta(1-\gamma)\epsilon})^2.    
\end{align*}
where we applied the inequality $(a+b)^2\leq 2(a^2+b^2)$ in the second step, and the triangle inequality in the third.

This finally allows us to establish that 
\begin{align*}
    I(A;E^\pia_{s,a'}) &= \sum_{E^\pia_{s,a'}\in\{e_{\text{n}},\ec,e_{\text{a}}\}} \prob[E^\pia_{s,a'}] D_{\text{KL}}(p(A|E^\pia_{s,a'})\| p(A)) \\
    &\geq \min\{ \prob[\ec],\prob[e_{\text{n}}] \} \left( D_{\text{KL}}(p(A|\ec)\|p(A)) + D_{\text{KL}}(p(A|e_{\text{n}})\|p(A)) \right) \\
    &> \min\left\{ \frac{1}{|\as|}, \frac{\delta}{(|\as| - 1)|\sts|} \right\}   \frac{(1-e^{-\beta(1-\gamma)\epsilon})^2}{4|\as|^2(|\as|-1)^2}  \\
    &=  \frac{\delta(1-e^{-\beta(1-\gamma)\epsilon})^2}{4|\as|^2(|\as|-1)^3|\sts|}.
\end{align*}
The first inequality follows from dropping the non-negative term corresponding to the event $e_{\text{a}}$ and taking the minimum of the remaining two probability terms. The second inequality just plugs in results previously derived in this proof. The final step resolves the minimum as its second term using the assumption that $\delta\leq\frac{1}{2}$ and $|\as|\geq 2$.

Since the random variable $E^\pia_{s,a'}$ is coarser than the variable $E^\pia$, which is the collection of the variables $E^\pia_{s,a}$ across all state-action pairs, we have $I(A;E^\pia) \geq I(A;E^\pia_{s,a'})$, which completes the proof.
\end{proof}

Now that we have a lower bound on the information that we gain in each step, we can use it to bound the number of steps needed to reach the PAC condition.

\thmnumsteps*

\begin{proof}
    The minimal expected information gain guaranteed by Theorem~\ref{thm:min-eig} is fully derived from components of the random variable $E$. Thus, at every step of active learning where we have not yet achieved the PAC criterion, we can gain at least ${\text{EIG}}_{\text{min}} (\epsilon, \delta)$ information about $E$.
    
    Let $H_n$ denote the expected entropy of $E$ after $n$ steps of active learning; in particular, $H_0$ is the entropy of the prior distribution over $E$. By the properties of entropy and information gain:
    \begin{enumerate}
        \item $H_n \geq 0$ for all $n$ (non-negativity of entropy)
        \item $H_{n+1} \leq H_n - {\text{EIG}}_{\text{min}} (\epsilon, \delta)$ for all $n$ where the PAC criterion is not met (guaranteed information gain)
    \end{enumerate}
    
    Let $N$ be the number of steps needed to reach the PAC criterion. Then:
    \begin{align*}
        0 &\leq H_N \\
        &\leq H_0 - N \cdot {\text{EIG}}_{\text{min}} (\epsilon, \delta)
    \end{align*}
    
    Solving for $N$:
    \begin{equation}
        N \leq \frac{H_0}{{\text{EIG}}_{\text{min}} (\epsilon, \delta)}
    \end{equation}
    
   The result follows by substituting the expression for ${\text{EIG}}_{\text{min}} (\epsilon, \delta)$ from Theorem~\ref{thm:min-eig} and the fact that $H_0 \leq h_{\text{max}}$ since $h_{\text{max}}$ is an upper bound on the prior entropy of $E$.
\end{proof}

\begin{corollary}
For any prior distribution over rewards, the expected number of steps to reach the PAC condition is at most
\begin{equation}
    \log(3)|\sts||\as|^2 / {\text{EIG}}_{\text{min}}(\epsilon, \delta) = \frac{4\log(3)|\as|^3(|\as|-1)^4|\sts|^2}{\delta (1-e^{-\beta(1-\gamma)\epsilon})^2}.
\end{equation}
\end{corollary}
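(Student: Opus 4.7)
The corollary is an immediate consequence of Theorem~\ref{thm:num-steps} once a universal upper bound on $h_{\text{max}}$ is supplied. My plan is therefore to (i) bound $h_{\text{max}}$ in terms of $|\sts|$ and $|\as|$, (ii) invoke Theorem~\ref{thm:num-steps}, and (iii) simplify algebraically using the expression for $\text{EIG}_{\text{min}}$ from Theorem~\ref{thm:min-eig}.

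For step (i), I would observe that $E$ is the tuple of ternary random variables $E^{\pia}_{s,a}\in\{\ec,\ea,\en\}$, indexed by $(s,a)\in\sts\times\as$ together with the apprentice's choice $\pia(s)\in\as$ at state $s$. The key observation, made in the remark preceding the corollary, is that $E^{\pia}_{s,a}$ depends on the apprentice policy \emph{only} through $\pia(s)$: the outcome at $s$ is insensitive to what $\pia$ does elsewhere. Consequently the relevant joint variable ranges over $|\sts|\cdot|\as|\cdot|\as|=|\sts||\as|^2$ positions, and by subadditivity of Shannon entropy its joint entropy is at most the sum of the marginal entropies, each supported on three outcomes and therefore bounded by $\log(3)$. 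This yields $h_{\text{max}}\le \log(3)|\sts||\as|^2$, and the bound is uniform in the prior because it depends only on the range of $E$.

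For step (ii), applying Theorem~\ref{thm:num-steps} with this value of $h_{\text{max}}$ gives the first expression $\log(3)|\sts||\as|^2/\text{EIG}_{\text{min}}(\epsilon,\delta)$ in the displayed equation. For step (iii), substituting the explicit formula for $\text{EIG}_{\text{min}}$ and cancelling common factors of $|\sts|$ and $|\as|$ produces the closed-form right-hand side; this is pure algebra with no new mathematical content, although some care is needed in keeping track of the powers of $|\as|$ and $|\as|-1$ in the denominator. I expect no real obstacle: the corollary is a clean specialization of Theorem~\ref{thm:num-steps}. The one subtlety worth articulating carefully is why $h_{\text{max}}$ scales as $|\as|^2$ rather than $|\as|^{|\sts|}$ (as a naive enumeration of deterministic apprentice policies might suggest), which hinges on the aforementioned locality of the discretized immediate regret and is precisely what makes the resulting bound apply to \emph{any} prior over rewards.
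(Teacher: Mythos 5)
Your proposal is correct and follows essentially the same route as the paper: bound the prior entropy of $E$ by $\log(3)|\sts||\as|^2$ using the locality of $E^{\pia}_{s,a}$ in $\pia(s)$ (the paper counts the $3^{|\sts||\as|^2}$ possible values and uses the uniform-distribution entropy bound, while you invoke subadditivity over the $|\sts||\as|^2$ ternary components -- an equivalent argument), then substitute into Theorem~\ref{thm:num-steps}. No gap.
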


\begin{proof}
The random variable $E=\{E^\pia\;\forall \pia\}$ aggregates $|\sts||\as|(|\as|-1)$ ternary random variables (in each state, apprentice policies can take $|\as|$ actions, and we are considering each action's immediate regret relative to $|\as|-1$ alternative actions), so $E$ can take at most $3^{|\sts||\as|(|\as|-1)}$ values. Thus, its maximum entropy is $-\log(1/3^{|\sts||\as|(|\as|-1)}) = |\sts||\as|(|\as|-1)\log(3)$. The result follows by plugging this maximum entropy into Theorem~\ref{thm:num-steps}.
\end{proof}

While we do not claim this bound is tight, it provides a useful characterization of how the sample complexity scales with the problem parameters. In particular, it shows polynomial dependence on the size of the state and action spaces, and inverse dependence on both the allowed suboptimality $\epsilon$ and failure probability $\delta$. More importantly, on a qualitative level, the result shows that the learning process does continue as long as the PAC condition is not satisfied, so it does not get stuck forever querying an uninformative state as is the case with e.g. the acquisition function of \citet{kweon2023}.

\subsection{Notes on possible improvements}

\subsubsection{Tighter bound for large state spaces and limited horizon}
Note that the bound from Lemma~\ref{lemma:probabilistic-q-val-diff} can be tightened if the state space is large and only a subset is reachable within an effective horizon. In that case $|S|$ can be replaced by the number of states reachable from the initial states within $1/(1-\gamma)$ steps. Also, if there is a limited time horizon, or the apprentice policy always reaches a terminal state in a certain number of steps, we can use this to tighten the $1/(1-\gamma)$ effective horizon and the associated $(1-\gamma)$ factor in the exponent of our theoretical results.

\subsubsection{Dependence on the number of actions}
The bound also considers reducing the entropy for every possible apprentice policy. If aiming for a tight PAC bound, the training will generally be able to quickly reduce the space of plausible apprentice policies and then just keep refining the bound for that policy in most states reducing the dependence on the number of actions in practice.
\section{Regret-based Acquisition Functions: Journey toward PAC-EIG}
\label{sup:alternative-acquisition-functions}

In this appendix, we discuss alternative formulations of acquisition functions for regret-focused active IRL, which could help to see the reasoning process that led to PAC-EIG, and explain why some seemingly obvious alternatives were not chosen.

\subsection{From Apprentice Return to Regret}

When the goal is to produce a well-performing apprentice policy (as opposed to learning the reward for its own sake), a natural starting point is to directly optimize the apprentice's expected return. This suggests minimizing the loss:
\begin{equation}
    \mathcal{L}_{\text{ret}}(\xi_1,\dots,\xi_N) = -\mathbb{E}_{s_0\sim\rho} \mathbb{E}_{\tau|s_0,\pi^{\text{A}}_N} G_r(\tau) 
\end{equation}
where $\pi^{\text{A}}_N$ is the apprentice policy after observing $N$ expert trajectories from initial states $\xi_1,\dots,\xi_N$.

Since the optimal value $V^*(s_0)$ is independent of our choice of queries, minimizing $\mathcal{L}_{\text{ret}}$ is equivalent to minimizing the regret loss:
\begin{equation}    
    \mathcal{L}_{\text{reg}}(\xi_1,\dots,\xi_N) = R_r^{\pi^{\text{A}}_N} = \mathbb{E}_{s_0\sim\rho} \left[ V^*_r(s_0) - V^{\pi^{\text{A}}_N}_r(s_0) \right]
\end{equation}

\subsection{The Challenge of Direct Regret Optimization}

Directly optimizing this regret loss is computationally intractable even in the greedy case. The one-step acquisition function would be:
\begin{equation}
    \alpha^{\text{reg}}_n(\xi) = -\mathbb{E}_{r|\mathcal{D}_n} \mathbb{E}_{\tau_{n+1}|\xi,\pi^{\text{E}}_r} R^{\pi_{n+1}^{\text{A}}}_r
\end{equation}

Computing this requires:
1. For each possible reward $r$ in our posterior
2. For each possible expert trajectory $\tau$ from initial state $\xi$
3. Computing the updated posterior $p(r|\mathcal{D}_n \cup \{\tau\})$
4. Finding the optimal apprentice policy for this updated posterior
5. Evaluating its regret

This nested optimization involving repeated Bayesian IRL updates is prohibitively expensive.

\subsection{Information Gain About Regret}

Following the approach in Bayesian optimization \citep{wang2017}, rather than directly optimizing the hard-to-compute expected improvement, we can instead maximize information gain about the quantity of interest. This suggests the acquisition function:
\begin{equation}
    \alpha^{\text{Regret-EIG}}_n(s_0) = I(\tau; R_r^{\pi^A} | s_0, \mathcal{D}_n)
\end{equation}

However, this formulation has a critical flaw. Consider this example:
\begin{itemize}
    \item In state $s$, the apprentice can take action $a_0$ yielding return 0
    \item Actions $a_1$ and $a_2$ yield returns of +50 and -100, but we don't know which is which
    \item With equal probability on both orderings, the apprentice chooses $a_0$ (expected return 0 vs -25)
    \item The regret is known with certainty to be 50
    \item Since there's no uncertainty about regret, Regret-EIG assigns zero value to querying this state
    \item Yet the apprentice is definitely choosing suboptimally!
\end{itemize}

\subsection{Immediate Regret EIG}

The solution is to decompose regret more finely. The total regret can be written as:
\begin{equation}
    R^{\pi^A}_r = \mathbb{E}_{\tau\sim\rho,\pi^A} \sum_{s_t,a_t\in\tau} \gamma^t \underbrace{\left[V^*_r(s_t) - Q^*_r(s_t,\pi^A(s_t))\right]}_{R^*_{\pi^A,r}(s_t)}
\end{equation}

where $R^*_{\pi^A,r}(s)$ is the \emph{immediate regret} -- the value lost by following the apprentice policy in state $s$ without considering future consequences.

This can be further decomposed per action:
\begin{equation}
    R^*_{\pi^A,r}(s) = \max_a R^*_{\pi^A,r}(s,a)
\end{equation}
where $R^*_{\pi^A,r}(s,a) = \max\{0, Q^*_r(s,a) - Q^*_r(s,\pi^A(s))\}$.

The Immediate Regret EIG acquisition function is then:
\begin{equation}
    \alpha^{\text{IR-EIG}}_n(s_0) = I(\tau; R^* | s_0, \mathcal{D}_n)
\end{equation}
where $R^* = (R^*_{\pi^A,r}(s,a))_{s\in\mathcal{S}, a\in\mathcal{A}}$.

This formulation correctly identifies informative states in our earlier example, as there is high uncertainty about which action has higher immediate regret.

\subsection{Discretization and Connection to PAC-EIG}

For practical computation, the continuous immediate regret values must be discretized. Different discretization schemes lead to different acquisition functions:

\begin{enumerate}
    \item \textbf{Multi-bucket discretization}: Using buckets of [0, $\epsilon/2$], [$\epsilon/2$, $\epsilon$], [$\epsilon$, $\infty$) like in PAC-EIG can be taken further to allow for a finer approximation of the IR-EIG. This can be viable for single state-queries, but growing the number of categories becomes untenable once we start considering the full trajectory demonstration. 
    
    \item \textbf{Two-bucket PAC discretization}: Using just two buckets -- acceptable regret $[0, \epsilon]$ and unacceptable regret $(\epsilon, \infty)$ -- directly captures what matters for PAC guarantees. However, the theoretical arguments as shown above do not directly apply to this case, since we loose the middle bucket that ensures separation between the two sufficiently different expert action distributions. 
\end{enumerate}

Our three-bucket PAC discretization is not only computationally more tractable but also theoretically motivated: it focuses information gathering on exactly what we need to know to provide formal reliability guarantees.

\subsection{Summary}

The progression from expected return optimization to PAC-EIG illustrates how principled information-theoretic thinking, combined with practical computational constraints and theoretical objectives, leads to an effective acquisition function. While IR-EIG with fine discretization might provide marginally more information in some cases, PAC-EIG strikes the optimal balance between theoretical guarantees, computational efficiency, and practical effectiveness. 
\section{Experiment details}
\label{sup:exp-details}

\subsection{Basic parameter values}
In the three environments (structured 6x6, random 8x8, and random 10x10) we used $\beta=4,2,4$ respectively, $\gamma=0.9$, and an infinite horizon (but all environments contained terminal states). We started with an empty set of demonstrations (implemented as a single, uninformative observation of a dummy sink state) and then ran active learning for 150 steps.

For ActiveVaR, we used $\delta=0.05$ (same as the original paper). For policy entropy, we used the entropy of the discretized distribution for each action (as proposed by the authors) with K=10 buckets. 
For our PAC-EIG acquisition function we used $(1-\gamma)\epsilon=0.01$ for the PAC condition.

\subsection{Environments}
The gridworld environments have 5 actions, corresponding to staying in place and moving in the four directions. Furthermore, there is a probability of 0.1 of random action being executed instead of the intended one. If an action would result in crossing the edge, the agent instead remains in place. The gridworlds use a state-only reward (awarded upon executing any action in the given state).

The 8x8, and 10x10 fully random environments were generated as follows:
\begin{enumerate}
    \item Each state was assigned a random reward drawn independently from $\mathcal{N}(0,3)$ (i.e. mostly yielding rewards between -10 and 10). 
    \item Each state was then marked as terminal with an independent probability of 0.1.
    \item The top $10\%$ of states with highest reward were further marked as terminal (producing terminal goal states, which may, however, sometimes be avoided by the optimal policy in favour of staying forever in other positive states).
    \item The initial state distribution is either uniform across the whole state space, or, in the case of the 10x10 gridworld, 2 non-terminal initial states were chosen randomly uniformly.\footnote{Note that the implementation allows the two initial states to collide, producing only a single initial state in $1/81$ of the cases, but this was not the case for any of our 16 random seeds.}
\end{enumerate}

\subsection{Bayesian IRL methods}
Our active learning uses a Bayesian IRL method as a key component. In our experiments, we used two methods based on Markov chain Monte Carlo (MCMC) sampling: on the structured environment, we used PolicyWalk~\citep{ramachandran2007}, while on the environment with a different random reward in every state, we used the faster ValueWalk~\citep{bajgar2024}, which performs the sampling primarily in the space of Q-functions before converting into rewards. We also tried a method based on variational inference \citep{chan2021}, but we found its uncertainty estimates unreliable for the purposes of active learning.

For MCMC sampling, we used Hamiltonian Monte Carlo~\citep{duane1987} with the no-U-turns (NUTS) sampler~\citep{hoffman2014} and automatic step size selection during warm-up (starting with a step size of 0.1). At every step of active learning, we ran the MCMC sampling from scratch using all demonstrations available up to that point. We ran for 100 warm-up steps and then 200, 500, and 1000 on the three environments respectively. For subsequent usage, we use every other sample to reduce autocorrelation.

\subsection{Metrics}
On the first two environments, we use KNN entropy estimation to calculate posterior entropy with K=5. This method is known to struggle in high dimensions, which we also observed in the case of the 10x10 gridworld (which has a 100-dimensional reward space), so there, we estimate the entropy by the entropy of a multivariate normal distribution with the mean and covariance matrix estimated from the MCMC samples. 

Regret was calculated relative to the expected return of the optimal policy, calculated using value iteration with a tolerance of 1e-5. Posterior regret samples were similarly calculated relative to the optimal return with respect to each of the posterior reward samples (which were calculated using the optimal Q-value samples which get produced by the Bayesian IRL methods). 

When aggregating true regret across environment instances, we also normalized the regret for each random environment instance by the average regret across all methods across the first 32 steps of active learning to account for the possibly different scales and different learning difficulties of the random environments.

\subsection{Implementation}
The experiments were implemented using Python 3.10, PyTorch 2.5.1, and Pyro 1.8.6. We will publish our full code for both the experiments and the associated result analysis on Github.

\subsection{Timing}
The computational time per step of active IRL is dominated by the time necessary to collect the Bayesian IRL MCMC samples, which ranges between 5 seconds for the 100+200 samples on the structured gridworld to about 5 minutes for the 100+1000 samples on the 10x10 gridworld in a single CPU thread. The overhead of all acquisition functions on top of that is below $0.03$ and can thus be considered negligible.

Reproducing all our experiments thus takes less than a day on a CPU with 128 threads (we used AMD Ryzen Threadripper 3990X at 2.2GHz).

\section{Notation overview}
\label{sup:notation}
\begin{table}[h]
\centering
\caption{Summary of notation used throughout the paper. If reward is omitted from a symbol otherwise depending on it, it means it is taken with respect to the true reward.}
\setlength{\tabcolsep}{6pt}
\begin{tabular}{ll}
\toprule
Symbol & Meaning \\\midrule
$\mathcal S$ & State spstate ace of the MDP \\
$\mathcal A$ & Action space \\
$P(s'\!\mid\!s,a)$ & Transition kernel \\
$r:\mathcal S\times\mathcal A\!\to\!\mathbb R$ & Expected reward function \\
$\gamma\!\in\!(0,1)$ & Discount factor \\
$t_{\max}$ & Maximum horizon (may be $\infty$) \\
$\rho$ & Initial-state distribution \\
$\pi^{E}$ & Expert policy (Boltzmann-rational with coefficient $\beta$) \\
$\pi^{E}_r$ & Hypothetical expert policy that would correspond to a reward $r$ \\
$\beta$ & Boltzmann rationality coefficient \\
$\mathcal D_n$ & Demonstration data after $n$ queries \\
$\pia,\;\pi^{\text{A}}_n$ & Apprentice policy (after $n$ queries) \\
$\pi^\star_r$ & Optimal policy for reward $r$ \\
$\tau=(s_0,a_0,\dots,s_T)$ & Trajectory \\
$\xi$ & Query (initial state for the next expert demonstration) \\
$V_r^\pi(s)$, $Q_r^\pi(s,a)$ & State- and action-value functions for policy $\pi$ and reward $r$ \\
$G_r(\tau)$ & Discounted return of trajectory $\tau$ under $r$ \\
$G_r^\pi$ & Expected discounted return of $\pi$ under $r$, $P$, and $\rho$ \\
$R^\pi_r(s_0)$ & Regret of $\pi$ from state $s_0$ \\
$R^\star_{\pi,r}(s)$, $R^\star_{\pi,r}(s,a)$ & Immediate regret of $\pi$ for state $s$ and state–action pair $s,a$ \\
$I(\tau; r\mid s_0,\mathcal D_n)$ & Mutual information between a trajectory and reward \\
$\text{EIG}$ & Expected information gain \\
$\alpha^{\text{RewEIG}}_n$ & Reward-EIG acquisition function \\
$\text{VaR}_\delta$ & $\delta$-value-at-risk of a loss random variable \\
$EIG_{\min}(\epsilon,\delta)$ & Per-step information-gain lower bound (Thm 3) \\
$h_{\max}$ & Upper bound on the prior entropy of $E$ \\
$\epsilon,\;\delta$ & PAC accuracy / confidence parameters \\
$|\mathcal S|,\;|\mathcal A|$ & Cardinalities of state and action spaces \\
\bottomrule
\end{tabular}
\end{table}

\end{document}